\documentclass{article}

\usepackage[english]{babel}

\usepackage[letterpaper,top=2cm,bottom=2cm,left=3cm,right=3cm,marginparwidth=1.75cm]{geometry}

\usepackage{amsmath}
\usepackage{amsthm}
\usepackage{graphicx}
\usepackage[colorlinks=true, allcolors=blue]{hyperref}
\usepackage{natbib}
\usepackage{booktabs}
\usepackage{amssymb}
\usepackage{mathtools}
\usepackage[ruled,vlined]{algorithm2e}
\usepackage{dsfont}
\usepackage{tabularx}
\usepackage{pifont}
\usepackage{multirow}
\usepackage{makecell}
\usepackage[bottom]{footmisc}

\newcommand{\cmark}{\ding{51}}
\newcommand{\xmark}{\ding{55}}

\SetKwInput{KwInput}{Input}
\SetKwInput{KwOutput}{Output}

\newcolumntype{L}[1]{>{\raggedright\arraybackslash}p{#1}}
\newcommand{\arrowtag}[1]{\hfill(\(#1\))}

\theoremstyle{plain}
\newtheorem{theorem}{Theorem}

\newtheorem{corollary}[theorem]{Corollary}
\theoremstyle{definition}

\theoremstyle{remark}

\title{Distribution Matching Evolutionary Algorithms for\\ Rare Event Sampling}
\author{
  Yonatan Gideoni \quad\quad Yarin Gal \\[0.5em]
  University of Oxford \\
  \texttt{yg@robots.ox.ac.uk}
}
\date{}

\begin{document}
\maketitle

\begin{abstract}
A novel discovery is one which is both useful and surprising: a generative model’s output is a useful discovery if it has a low probability of being generated (it's surprising) and a high reward (it's useful). Global optimization can directly increase the probability of sampling high rewards but typically requires updating model weights. Such gradient based optimization is expensive and bars using capable closed-source models. Instead, modern search methods for discovery sacrifice the global target, and use evolutionary algorithms with local reward maximizing objectives, permitting the search to focus only on high probability samples. In this paper, we interpret various evolutionary algorithms as approximate Markov Chain Monte Carlo, an optimization-free method to sample from complex distributions. This interpretation allows developing Distribution Matching Evolutionary Algorithms (DME), a class of search methods which sample from a global target distribution without updating weights. Empirically, DME has a higher sample efficiency than existing methods on problems requiring many samples to find a solution.\footnote{For code see \href{https://github.com/YonatanGideoni/dme}{https://github.com/YonatanGideoni/dme}.}
\end{abstract}

\vspace{-12pt}
\section{Introduction}
\vspace{-4pt}
This paper is concerned with the problem of how to find novel discoveries using generative models. Specifically, what does it mean for a discovery to be novel? Intuitively, a novel discovery is one which has never been seen before and is in some sense useful. Whether a discovery has been seen can be judged based on prior work, while its utility can be measured using some verifier or reward.\footnote{There are very interesting, harder cases where how to measure utility is unclear -- although highly relevant, we do not discuss those here.} When searching for a discovery using a generative model, if it is novel then the model will likely assign it a low generation probability, due to not seeing the discovery in its training data.

A common way to leverage generative models to find discoveries is using evolutionary algorithms \citep{novikov2025alphaevolve}, see Algorithm \ref{alg:ea} for an example. Evolutionary algorithms search for high reward samples by mutating existing candidates and selecting a subset based on their reward. Generative models are useful for mutating samples as they tend to yield semantically meaningful mutations, e.g. changing a program in a way so it still compiles. A common selection operator is to keep only the highest-reward candidates, which maximizes the local reward.

\begin{algorithm}[!b]
\label{alg:ea}
\caption{Example Evolutionary Algorithm}

\KwInput{An initial parent $x_0$, a mutation operator $q(x'|x)$, a selection operator $S$, a fitness function $F(x)$, population size $N$, number of generations $n$}

\For{$i=0...n$}{
    Generate a new population of offspring $x'_1,x'_2,...,x'_N\sim q(\cdot|x_i)$
    
    Evaluate offspring fitnesses $F(x'_1),F(x'_2),...,F(x'_N)$

    Select one of the offsprings as the next parent $x_{i+1}$ using $S$
}

Return the fittest offspring

\end{algorithm}

Local reward maximization methods rarely sample low probability outputs due to getting stuck in local modes of the distribution. A better alternative to local reward maximization would be searching for not \textit{a} high reward solution, but for \textit{all} high reward solutions \citep{bengio2021flow}. Concretely, instead of directly maximizing the reward, a more suitable goal would be to sample from a target distribution that focuses its probability mass on high reward regions. A mode collapsed search would sample solely from a single high reward region and thus be penalized. We refer to algorithms that sample from a target distribution as \textit{distribution matching} algorithms.

Generative models are typically trained for distribution matching using variational inference, where their weights are updated to minimize a KL divergence between the generated and the target distributions \citep{kingma2013auto,ho2020denoising,malkin2022gflownets,eijkelboom2024variational}. For discovery problems, training a generative model has two main downsides. First, it bars using capable fixed weights closed-source models, and second, training is much more expensive and time consuming than sampling. Due to these two reasons, most discovery methods are gradient-free evolutionary algorithms.

\looseness=-1 A wide class of alternative, optimization-free methods for sampling from a target distribution are based on Markov Chain Monte Carlo (MCMC). Although widely used in statistics \citep{brooks2011handbook}, MCMC is less prevalent in modern generative modelling due to requiring many steps to generate a single sample \citep{goodfellow2014generative,bengio2021flow}. For discovery however, methods typically anyway require many steps and once the discovery is made there is no need to generate additional samples.

Building on this insight, in this article we develop a class of inference only distribution matching algorithms. First, we demonstrate how some evolutionary algorithms can be interpreted as implementing approximate MCMC. We then use this interpretation to derive Distribution Matching Evolutionary Algorithms (DMEs), inference only search methods which sample from a target distribution. Empirically, DMEs are found to be more sample efficient than many existing methods on discovery problems which require many samples to find a solution.

\vspace{-2pt}
\section{Preliminaries}
\label{sec:dm4disc}
This section introduces and builds intuition for some relevant methods. First, we explain how sampling can be cast as optimization, and how sampling from a given target distribution allows performing global optimization. Then we discuss a relaxation of this optimal target, yielding the target distribution of \citet{bengio2021flow}. Afterwards, we present MCMC, the statistics method used here to sample from the relaxed target distribution.

\vspace{-2pt}
\subsection{Sampling as Optimization}
Sampling from a target distribution can be used to optimize a target function, with the first and arguably most well known method for doing so being simulated annealing \citep{kirkpatrick1983optimization}. In simulated annealing, one samples from $\pi_T(x)\propto \exp\left( -\beta f(x)\right)$, where $f(x)$ is the function to minimize and $\beta=1/T$ is an inverse temperature. As $T$ is decreased (``annealed'') to $0$, or equivalently $\beta\to\infty$, sampling from $\pi_T$ yields $x$s that minimize $f(x)$. Simulated annealing is useful but limited, as often sampling from $\pi_T$ is in itself nontrivial.

In our setting, in place of some $f(x)$ we presume access to some bounded positive reward function, $R(x)$. To find all high reward solutions, one would ideally like to sample $x$ uniformly from the max-reward set $S\coloneqq \{x|R(x)=\max_{x'}R(x')\}$, which would mean sampling from the idealized distribution $\pi_S(x)\coloneqq \frac{\mathbf{1}\{x \in S\}}{|S|}$, where $\mathbf{1}\{x \in S\}$ is $1$ when $x\in S$ and $0$ otherwise. Sampling from $\pi_S$ would result in sampling from \textit{all} max-reward solutions in the space and never sampling a suboptimal solution. In this sense $\pi_S$ is the optimal target distribution, although it is unattainable in practice.

\vspace{-2pt}
\subsection{Relaxing the Optimal Target}
There are two main difficulties with using $\pi_S$ as a target distribution when searching for discoveries. First, a priori the maximum reward is often unknown, e.g. if searching for new mathematical bounds the tightest possible bound is unknown in advance. Second, $\pi_S(x)$ encodes no notion of similarity or progress towards a higher reward the search algorithm may use, as it gives no probability mass to all suboptimal samples (see Figure \ref{fig:dist_relax} (b)).

A straightforward relaxation of $\pi_S$ is $\pi(x)=R(x)^\beta/Z$, where $Z$ is some unknown normalizing constant. As $\beta\to\infty$ one has that $\pi(x)\to\pi_S(x)$, as all suboptimal samples will have vanishingly small probability mass relative to the max-reward set. A higher $\beta$ will thus better approximate the ideal target but yields a harder to sample from distribution, while lower $\beta$s yield easier target distributions at the cost of more mass being placed on suboptimal solutions (see Figure \ref{fig:dist_relax} (c)).

\begin{figure}[t]
    \centering
    \includegraphics[width=\linewidth]{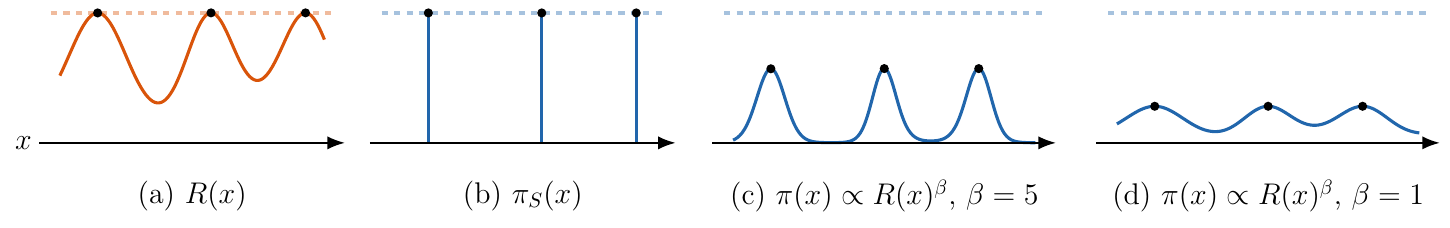}
    \vspace{-15pt}
    \caption{Given some reward function (a) sampling uniformly over all high reward solutions requires sampling from $\pi_S$ (b). $\pi_S$ is difficult to sample from due to slightly suboptimal solutions having no probability, so it is easier to sample from a relaxation (c). If relaxed too far, the target distribution will be almost uniform, leading to many wasted samples (d).}
    \label{fig:dist_relax}
    \vspace{-1pt}
\end{figure}

Historically, using $\pi(x)\propto R(x)^\beta$ as a target distribution was first introduced by \citet{bengio2021flow} for GFlowNets, although variations exist in earlier works \citep{kirkpatrick1983optimization}. GFlowNets aim to solve the same problem -- sampling diverse high reward solutions -- using methods from reinforcement learning and variational inference \citep{malkin2022gflownets}. Unlike simulated annealing, for GFlowNets $\beta$ is typically kept fixed. Both methods approximate the target distribution by updating a proposal distribution's parameters; next, we introduce a method to generate samples without needing to update the proposal distribution.

\vspace{-2pt}
\subsection{Markov Chain Monte Carlo}
To introduce MCMC it is helpful to first understand a simpler sampling algorithm, rejection sampling. Rejection sampling is a method to sample from some target distribution $\pi(x)$ given a proposal distribution $q(x)$. $q$ is needed as typically $\pi$ can be evaluated at a given point but cannot be sampled directly. Rejection sampling samples from $\pi$ by first sampling $x\sim q(\cdot)$ and accepting the sample with probability $\min\left(1,\frac{\pi(x)}{Mq(x)}\right)$, where the $\frac{\pi(x)}{q(x)}$ term is an importance sampling correction and $M$ is a constant which is sufficiently large to ensure that $Mq(x)\geq \pi(x)$. The sample is necessarily accepted if $\pi(x)\geq Mq(x)$ and is only sometimes accepted if $\pi(x)<Mq(x)$. Although simple, rejection sampling is very sample inefficient, as in high dimensional spaces there may be many regions where the target's density is much less than the proposal's. In these cases $\pi(x)\ll Mq(x)$, making the acceptance probability very small.

MCMC can be thought of as a sequential, local form of rejection sampling. MCMC is more robust to the choice of the proposal distribution than rejection sampling due to taking local steps, at the cost of needing multiple steps to generate a single independent sample from the target. The most general form of MCMC is Metropolis-Hastings MCMC \citep{metropolis1953equation,hastings1970monte}, where instead of accepting or rejecting a proposed point a change to an existing point is accepted or rejected.\footnote{Unless mentioned otherwise, when discussing MCMC we specifically refer to Metropolis-Hastings MCMC.} In MCMC the proposal distribution is replaced by a transition kernel, $q(x'|x)$, and a proposed $x'$ is accepted with probability $\min\left(1,\frac{\pi(x')q(x|x')}{\pi(x)q(x'|x)}\right)$. The $q(x'|x)$ terms arise as a correction factor, so the sampling will be unbiased even if it is likelier to transition from $x$ to $x'$ than from $x'$ to $x$. If the transition kernel is symmetric, where $q(x'|x)=q(x|x')$, the acceptance probability simplifies to $\min\left(1,\frac{\pi(x')}{\pi(x)}\right)$. Note that now the transition kernel drops out, so a proposal $x'$ is always accepted if it has a higher target probability than the previous point $x$, and is only sometimes accepted if its target probability is lower. Algorithm \ref{alg:mcmc} shows the full MCMC loop.

\begin{algorithm}
\label{alg:mcmc}
\caption{Metropolis-Hastings MCMC}

\KwInput{A starting point $x_0$, a (potentially unnormalized) target distribution $\pi(x)$, a transition kernel $q(x'|x)$, number of steps $n$}

\For{$i=0...n$}{
    Sample $x'\sim q(\cdot|x_i)$
    
    Calculate acceptance probability $a=\min\left(1,\frac{\pi(x')q(x_i|x')}{\pi(x_i)q(x'|x_i)}\right)$
    
    With probability $a$, set $x_{i+1}\leftarrow x'$, otherwise $x_{i+1}\leftarrow x_i$
}

Return all $x_i$; if $n$ is sufficiently large, they form a (correlated) set of samples from $\pi$

\end{algorithm}

\vspace{-2pt}
\section{Evolutionary Algorithms as Approximate Markov Chain\\Monte Carlo}
\vspace{-2pt}
At a high level, evolutionary algorithms generate samples and select which to keep for the next generation. MCMC's pseudocode is thus similar to that of certain evolutionary algorithms. In this section we show how some evolutionary algorithms can be interpreted as implementing approximate versions of MCMC, with a given evolutionary algorithm implicitly defining a transition kernel and a target distribution. While this interpretation will help us derive a class of evolutionary algorithms well suited for discovery, to build intuition we first demonstrate it using arguably the two simplest evolutionary algorithms, random search and hill climbing evolutionary strategies. The following derivations clarify which approximations the MCMC interpretation requires and how other, more general evolutionary algorithms can be interpreted as sampling from various target distributions as well.

In all cases, we assume the transition kernel uses a generative model $p(x)$ which can be conditioned using a prior sample, giving $p(x'|x)$. For example, this could be a large language model (LLM) which is asked to generate code and is conditioned on a previously generated program. For ease of notation, we omit the generative model's (fixed) parameters $\theta$, the sampling parameters (e.g. the temperature), and other constant factors from the conditioning.

\vspace{-3pt}
\subsection{Random Search}
In random search one samples IID from the generative model, conditioned on no prior samples. Thus, the transition kernel is $q(x'|x)=p(x')$. As random sampling always accepts the generated samples, its acceptance probability is always $1$, requiring $1=\frac{\pi(x')q(x|x')}{\pi(x)q(x'|x)}=\frac{\pi(x')p(x)}{\pi(x)p(x')}$. This is satisfied if $\pi(x)=p(x)$, which makes intuitive sense -- given no signal, random sampling's target is simply to sample from the proposal distribution. Interpreting random search as MCMC thus requires no approximations.

\vspace{-3pt}
\subsection{Hill Climbing Evolutionary Strategies}
There are two main kinds of hill climbing evolutionary strategies, $(1+N)$ and $(1,N)$ strategies \citep{beyer2002evolution}. The first, $(1+N)$, has $N$ offspring generated from the parent and picks the best out of the $N$ offspring+the one parent as the parent for the next generation. The second strategy, $(1,N)$, picks the best out of the $N$ offspring, regardless of the parent's fitness.

Although MCMC by default generates only a single sample per step, it is possible extending it to use $N$ samples per step. We extend MCMC to several samples using a trick used in some particle methods \citep{andrieu2010particle}, known as sampled-importance-resampling (SIR) \citep{rubin1987calculation}. First, conditioned on the previous sample $x$, hill climbing samples $N$ candidates from the generative model, yielding $x'_i\sim p(\cdot|x)$ for $1\leq i\leq N$. To simulate the best-candidate choice used in hill climbing, we soft sample the max-reward by resampling a $x'$ from the $\{x'_i\}_{i=1}^N$ by picking $x'_i$ with probability proportional to $R(x_i')^\beta$. Given a sufficiently high beta, this simulates the hard-max in hill climbing. Altogether, the sequential sampling-resampling process effectively yields the transition kernel $q(x'|x)\propto p(x'|x)R(x')^\beta$, with hill climbing requiring $\beta\to\infty$.

This transition kernel is equally valid for both $(1+N)$ and $(1,N)$ strategies. However, each strategy has a different acceptance criteria -- $(1+N)$ accepts a proposal $x'$ only if it is better than $x$, while $(1,N)$ accepts it regardless. Starting from the $(1,N)$ case, to always accept a sample requires the target distribution to fulfill the property $1=\frac{\pi(x')q(x|x')}{\pi(x)q(x'|x)}=\frac{\pi(x')p(x|x')R(x)^\beta}{\pi(x)p(x'|x)R(x')^\beta}$. As for generic $p(x'|x)$ the always-accept condition cannot be fulfilled, to satisfy it requires either approximating or assuming the generative model's conditional distribution is symmetric, so $p(x'|x)=p(x|x')$. Given a symmetric generative conditional distribution, to always-accept a sample requires $\pi(x)\propto R(x)^\beta$. Note that this is the distribution-matching target distribution proposed in section \ref{sec:dm4disc}, albeit at the infinite-beta limit.

For $(1+N)$, to reject a sample if $R(x)>R(x')$ requires $\frac{\pi(x)}{R(x)^\beta}\gg\frac{\pi(x')}{R(x')^\beta}$. In the infinite-beta limit, this is fulfilled by setting $\pi(x)\propto R(x)^{k\beta}$ for some $k>1$.

Thus, to summarize, hill climbing evolutionary strategies can be interpreted as MCMC with:
\begin{enumerate}
    \item A transition kernel $q(x'|x)\propto p(x'|x)R(x')^\beta$, where $\beta\to\infty$
    \item Assuming the generative model's conditional distribution is symmetric: $p(x'|x)=p(x|x')$
    \item A target distribution $\pi(x)\propto R(x)^{k\beta}$, where $k>1$ for $(1+N)$ strategies and $k=1$ for $(1,N)$
\end{enumerate}

\section{Distribution Matching Evolutionary Algorithms}
In the previous sections we discuss what is a suitable distribution matching objective for discovery and introduce MCMC, which is necessary for sampling from the target distribution using inference-only methods. Additionally, we show how some evolutionary algorithms can be interpreted as implementing approximate MCMC. In this section we bring these tools together to design a practical algorithm to sample from the target using a generative model. The following can be interpreted as deriving a form of hill climbing which a) does not require a symmetric transition kernel and b) samples from the proxy target distribution $\pi(x)\propto R(x)^\beta$ given finite beta.

Given the target distribution $\pi(x)\propto R(x)^\beta$, what should be the transition kernel? The optimal kernel is the target, where $q(x'|x)=\pi(x')$, but if it was possible to sample from the target then there would be no need to use MCMC in the first place. However, it is possible approximately sampling from the optimal transition kernel by sampling in batches using sampled-importance-resampling.

To illustrate, given $x_1,x_2,...,x_N\sim U$, where $U$ denotes the uniform distribution over the space, one can use batched importance sampling to sample from $\pi$ instead. By sampling $x'$ from $\{x_1,...,x_N\}$ with probability $\propto R(x_i)^\beta$, one effectively generates a sample from $\pi$. The sampling-resampling process will better approximate sampling from $\pi$ a) for larger $N$ and b) the closer $\pi$ is to $U$, with the notion of closeness being made precise later.

Sampling uniformly over the space would typically lead to very low quality samples, as most $x$ would have a very low reward. A much better initial proposal would be to use the generative model, so $x_1,...,x_N\sim p(\cdot|x)$. Due to sampling from $p(\cdot|x)$ and not $U$, $x'$ requires an importance sampling correction, so samples should be selected from $\{x_1,...,x_N\}$ with probability $\propto \frac{R(x_i)^\beta}{p(x_i|x)}$. This process yields the transition kernel
\begin{equation}
    q(x'|x)\approx p(x'|x) \frac{R(x')^\beta}{p(x'|x)Z}=\frac{R(x')^\beta}{Z}=\pi(x'),
\end{equation}
where the first $p(x'|x)$ stems from sampling $x_i\sim p(x_i|x)$ while the $\frac{R(x')^\beta}{p(x'|x)Z}$ term comes from sampling $x'$ from $\{x_1,...,x_N\}$ with probability $\propto \frac{R(x_i)^\beta}{p(x_i|x)}$. 

Note that as $q(x'|x)\approx \pi(x')$ the MCMC acceptance probability is approximately one. In practice, we assume the approximation is exact and drop the accept/reject step. 

Altogether, this transition kernel, target distribution, and sampling process result in Algorithm \ref{alg:dme}. Note that as we are chiefly interested in discovery problems, only the sample with the highest reward is returned.

\vspace{-6pt}
\begin{algorithm}
\label{alg:dme}
\caption{Distribution Matching Evolution}

\KwInput{A starting point $x_0$, a (potentially unnormalized) target distribution $\pi(x)$, a generative model $p(x'|x)$, batch size $N$, number of steps $n$}

\For{$i=0...n$}{
    Sample $x'_1,...,x'_N\sim p(\cdot|x_i)$
    
    Define the importance weights $w_i\coloneqq \frac{R(x_i')^\beta}{p(x'_i|x_i)}$ and self-normalized probabilities $p_i\coloneqq \frac{w_i}{\sum_{j=1}^N w_j}$

    Sample $x'\sim \{x'_1,...,x'_N\}$, where $x_i'$ is sampled with probability $p_i$
  
    Set $x_{i+1}\leftarrow x'$
}

Return the generated sample with the highest reward

\end{algorithm}

\vspace{-18pt}
\subsection{Quality of Approximation}
\vspace{-3pt}
The sampling-resampling process DME uses only approximates sampling from the true target distribution due to using a finite batch size. To understand how well sampling from the target is approximated we derive the first order correction to $q(x'|x)$ in the sampling batch size, $\frac{1}{N}$. Broadly, it is clear that if the generative model exactly samples from the target, so $p(x'|x)=\pi(x')$, there is no approximation, and in some sense the closer these two distributions are the better is the approximation. The size of this error is given by the first order correction, with proofs and mild necessary assumptions being delegated to Appendix \ref{app:proofs}.

\begin{theorem}[First order correction to $q(x'|x)$, informal]
\label{thm:first_corr}
    Define the transition kernel $q(x'|x)$ using the sampling-resampling process given in Algorithm \ref{alg:dme}. Then, under some mild conditions, one has that
    \begin{equation}
        q(x'|x)=\pi(x')\left(1+\frac{D(x'|x)}{N}\right)+O\left(N^{-3/2}\right),
    \end{equation}
    where $D(x'|x)\coloneqq \int\frac{\pi(y)^2}{p(y|x)}dy-\frac{\pi(x')}{p(x'|x)}$.
\end{theorem}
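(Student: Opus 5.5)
Write the proposal density $p(\cdot)=p(\cdot|x)$ with $x$ fixed, and assume $\pi$ is normalized. This is without loss of generality, since the self-normalized weights are invariant to rescaling. Define the normalized importance ratio $w(y)\coloneqq \pi(y)/p(y)$, so that $\mathbb{E}_{y\sim p}[w(y)]=1$ and $\mathbb{E}_{y\sim p}[w(y)^2]=\int \pi(y)^2/p(y)\,dy$. The kernel produced by Algorithm \ref{alg:dme} is obtained by exchangeability: the output lands at $x'$ only if some $x'_i=x'$ and that index is selected. Hence
\begin{equation}
    q(x'|x)=N\,p(x')\,\mathbb{E}\left[\frac{w(x')}{w(x')+\sum_{j=2}^N w(x'_j)}\right]
    =p(x')w(x')\,\mathbb{E}\left[\frac{1}{\bar W_{N}}\right],
\end{equation}
where $\bar W_N\coloneqq \frac{1}{N}\bigl(w(x')+\sum_{j=2}^N w(x'_j)\bigr)$ and the expectation is over $x'_2,\dots,x'_N\overset{iid}{\sim}p$. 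Since $p(x')w(x')=\pi(x')$, the whole question reduces to expanding $\mathbb{E}[1/\bar W_N]$ to order $1/N$.

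To do this, write $\bar W_N=1+\Delta$, where
\begin{equation}
    \Delta=\frac{w(x')-1}{N}+\frac{1}{N}\sum_{j=2}^N\bigl(w(x'_j)-1\bigr).
\end{equation}
The second term has mean zero and variance $\frac{N-1}{N^2}\sigma^2$, with $\sigma^2=\mathrm{Var}_p(w)=\int\pi^2/p-1$. Expanding the reciprocal as $1/(1+\Delta)=1-\Delta+\Delta^2-\Delta^3/(1+\Delta)$ and taking expectations term by term gives:
\begin{itemize}
\item $\mathbb{E}[\Delta]=(w(x')-1)/N$;
\item $\mathbb{E}[\Delta^2]=\sigma^2/N+O(N^{-2})$;
\item the cubic remainder is $O(N^{-3/2})$.
\end{itemize}
Collecting terms,
\begin{equation}
    \mathbb{E}[1/\bar W_N]=1+\frac{1}{N}\bigl(\sigma^2+1-w(x')\bigr)+O(N^{-3/2})
    =1+\frac{1}{N}\Bigl(\int\frac{\pi(y)^2}{p(y|x)}dy-\frac{\pi(x')}{p(x'|x)}\Bigr)+O(N^{-3/2}),
\end{equation}
which is exactly $1+D(x'|x)/N$. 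Multiplying by $\pi(x')$ yields the claim.

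The main obstacle is controlling the remainder $\mathbb{E}[\Delta^3/(1+\Delta)]$. The difficulty is that $1/\bar W_N$ can blow up when all weights are small, so the Taylor expansion is not uniformly valid. These are the "mild conditions" to impose. I would assume the weights are bounded below away from zero, or that $\bar W_N$ has bounded negative moments. I would also assume a finite third or fourth moment of $w$ under $p$, e.g. $\int \pi^3/p^2<\infty$. With such assumptions, $\mathbb{E}|\Delta|^3=O(N^{-3/2})$ follows by Rosenthal's or the Marcinkiewicz--Zygmund inequality, and Hölder's inequality combined with the negative-moment bound handles the $1/(1+\Delta)$ factor. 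A cleaner alternative is to split the expectation on the event $\{|\Delta|<1/2\}$, where $|1/(1+\Delta)|\le 2$, and use Chebyshev or a moment bound to show its complement has probability $O(N^{-3/2})$ or smaller. On the complement, $1/\bar W_N\le N/w(x')$ is crude, so high enough moments are needed to beat the factor $N$. For this reason I would first try the bounded-below-weights assumption, e.g. $R$ bounded below by a positive constant and $p$ bounded above, which makes $1/\bar W_N$ bounded and renders the remainder routine.
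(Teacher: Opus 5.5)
Your argument is correct under the hypotheses you settle on: a finite third moment $\int\pi^3/p^2<\infty$ and weights bounded below, $\inf_y \pi(y)/p(y|x)>0$. Its skeleton is the paper's. Both use the exchangeability identity $q(x'|x)=\pi(x')\,\mathbb{E}[1/\bar W_N]$, the exact expansion $1/(1+u)=1-u+u^2-u^3/(1+u)$, a second moment giving $\chi^2(\pi\|p(\cdot|x))/N$, and Rosenthal's inequality for $\mathbb{E}|\Delta|^3=O(N^{-3/2})$. Folding $w(x')$ into $\Delta$ is a small simplification of the paper's bookkeeping. The paper instead factors out $1/(w_i+S)$ and expands it separately, which yields the same $1-\pi(x')/p(x'|x)$ term.

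The genuine difference is the remainder, where the paper assumes no lower bound on the weights. It splits only on the one-sided event $\{\Delta<-1/2\}$ (in your normalization), since off this event $|\Delta^3/(1+\Delta)|\le 2|\Delta|^3$ regardless of the upper tail. On the event, $|\Delta|<1$ and $1/\bar W_N\le N/w(x')$. The paper then shows $\Pr(\bar W_N<1/2)\le e^{-cN}$ by truncating to $y_j=\min(w_j,M)$, with $M$ large enough that $\mathbb{E}y_j\ge 3/4$, and applying Hoeffding's inequality. So your worry that ``high enough moments are needed to beat the factor $N$'' is misplaced. It arises only if you bound the two-sided complement by Chebyshev or Markov, which would need a fifth moment of $w$. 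Because the weights are nonnegative, the lower tail of their average is exponentially small using nothing beyond a finite mean. The same fact shows that your alternative hypothesis, bounded negative moments of $\bar W_N$, holds automatically.

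The paper's route therefore buys weaker hypotheses: full support, finite $\chi^2$, and a finite third moment. Yours buys a one-line remainder bound, at the price of requiring $\inf\pi/p>0$. It also puts an extra global factor $1/\inf_y w(y)$ into the remainder constant, and this factor grows exponentially in $\beta$. Your concrete sufficient condition ($R$ bounded below, $p$ bounded above) also forces the space to have finite measure, since otherwise $Z=\int R^\beta=\infty$. That is harmless for the paper's length-capped outputs with $R=e^{\text{accuracy}}\ge 1$, but it is not a ``mild'' condition in general.
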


Note that $D(x'|x)=0$ if $\pi(x')=p(x'|x)$, as required for the sampling process to be unbiased. To get a measure for the correction $D(x'|x)$'s magnitude, one can measure the average error when sampling proposals $x'$ from $p(\cdot|x)$ instead of from $\pi$. Doing so yields the following corollary, which quantifies this error.

\begin{corollary}[First order correction as a $\chi^2$ divergence]
    \label{corr:chisq}
    One has that
    \begin{equation}
        \mathbb{E}_{x'\sim p(\cdot|x)}[D(x'|x)]=\int \left(\frac{\pi(x')-p(x'|x)}{p(x'|x)}\right)^2 p(x'|x)dx'=\chi^2(\pi||p(\cdot|x))
    \end{equation}
    Where $\chi^2$ is the chi-squared divergence. Specifically, note that $\chi^2$ has the form of a weighted mean-square error between the two distributions.
\end{corollary}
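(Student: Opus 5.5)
Write $p(\cdot)=p(\cdot|x)$ and let $\pi$ be normalized. The plan is to take the expectation of $D(x'|x)$ under $x'\sim p$ directly from the definition in Theorem \ref{thm:first_corr}, and then show that the resulting scalar equals the $\chi^2$ integral. No asymptotics are needed; the corollary follows from the formula for $D$ alone.

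\textbf{Step 1 (averaging $D$).} Since $\int\frac{\pi(y)^2}{p(y)}dy$ does not depend on $x'$, its expectation under $p$ is itself, using $\int p=1$. The second term gives
\begin{equation*}
\mathbb{E}_{x'\sim p}\left[\frac{\pi(x')}{p(x')}\right]=\int \pi(x')\,dx'=1 .
\end{equation*}
Hence
\begin{equation*}
\mathbb{E}_{x'\sim p}[D(x'|x)]=\int\frac{\pi(y)^2}{p(y)}dy-1 .
\end{equation*}
This step requires $\pi\ll p$, i.e.\ $p(y|x)>0$ wherever $\pi(y)>0$, which is already implicit in the importance weights of Algorithm \ref{alg:dme}. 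It also requires $\int \pi^2/p<\infty$, which is presumably among the ``mild conditions'' of Theorem \ref{thm:first_corr}.

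\textbf{Step 2 (matching the $\chi^2$ form).} Expand the stated integrand:
\begin{equation*}
\left(\frac{\pi-p}{p}\right)^2 p=\frac{\pi^2}{p}-2\pi+p .
\end{equation*}
Integrating, and using that both $\pi$ and $p$ are normalized, gives $\int\frac{\pi^2}{p}-2+1=\int\frac{\pi^2}{p}-1$. This matches Step 1. It is also the standard definition of $\chi^2(\pi\|p)=\int (\pi/p-1)^2\,p$, which gives the final equality. Rewriting the integrand as $p\,(\pi-p)^2/p^2$ exhibits it as a mean-square error of $\pi-p$, weighted by $1/p$ and averaged under $p$, which is the interpretation claimed in the corollary.

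\textbf{Main obstacle.} The only delicate point is normalization. The target is written $\pi\propto R^\beta$, but the identity $\mathbb{E}_p[\pi/p]=1$ requires $\pi$ to be the normalized density. I would state explicitly that $\pi$ in Theorem \ref{thm:first_corr} is the normalized target. This is consistent with the leading term $q\approx\pi$, which must integrate to one, and with the fact that $D$ is not scale-invariant. I would also note that Fubini/Tonelli justifies interchanging expectation and integral, since all terms are nonnegative and finite under the integrability assumption.
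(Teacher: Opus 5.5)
Your proposal is correct and follows the paper's proof. Both evaluate $\mathbb{E}_{x'\sim p(\cdot|x)}[D(x'|x)]=\int \pi(y)^2/p(y|x)\,dy-1$ directly and then match it to the $\chi^2$ integral using $\int\pi\,dy=\int p(y|x)\,dy=1$; the only cosmetic difference is that you expand the square explicitly, while the paper passes through $\int\bigl(\pi^2/p-p\bigr)\,dy$ and implicitly adds $\int(2p-2\pi)\,dy=0$, and your caveats (normalized $\pi$, $\pi\ll p(\cdot|x)$, finite $\int\pi^2/p$) are what the formal version of Theorem~\ref{thm:first_corr} assumes via full support of $p$ and finite $\chi^2(\pi\|p(\cdot|x))$.
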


Thus, there are three ways to improve the approximate sampling from the target distribution:
\begin{enumerate}
    \item Use an easier target which is more similar to the generative model's distribution. Typically, this would mean lowering $\beta$.
    \item Use a larger batch size $N$ when sampling proposals.
    \item Use a better generative model, or improve the given model's conditioning. For example, when the generative model is a language model, a better prompt would focus more probability mass on high-reward regions.
\end{enumerate}

MCMC's sequential sampling compensates for the transition kernel being only approximately optimal, where a better approximation requires fewer steps to effectively sample the target.

\subsection{Novelty, Heuristic Interpretations, and Proposal-Agnosticism}
To better understand DME it helps interpreting its behavior in some limits, in which cases it reduces to various heuristics. For example, in the $\beta\to\infty$ limit DME would always select the highest reward sample, akin to hill climbing. Where it differs from typical hill climbing is in how ties are resolved given several samples with the same reward. If $R(x_1)=R(x_2)$ but $p(x_1|x)<p(x_2|x)$, DME will with higher probability pick $x_1$ due to the inverse dependence on the probability in the importance weights. In this sense DME prefers novel solutions, where lower probability samples are always picked all else being equal. If additionally one of the high-reward samples has a much lower probability than all the others, then DME is equivalent to hill climbing with a pick-lowest-probability tie-breaker. More generally, for finite $\beta$, DME can be interpreted as a form of stochastic hill climbing.

An evolutionary algorithm typically cannot be described independently of its mutation operator, as different kinds of mutations can make the same algorithm give very different results. For example, hill climbing with one model proposing candidates may pick approach A $30\%$ of the time and approach B $70\%$, while for a different model the split may instead be $90\%/10\%$. Unlike typical evolutionary algorithms DME is approximately proposal-agnostic, as the importance sampling weights correct for the probability the generative model gives to different samples. Due to the approximation's quality, in practice DME may still be much more sample efficient with some models than with others.

\section{DME for Language Models}
While DME as described in the previous section is suitable for generic generative models, in this section we describe two tweaks that make it better suited for language models. The general algorithm remains the same, with the two tweaks resulting only in modifying the importance sampling weights. Other design choices, for language models or other generative models, would accordingly give a different valid distribution matching evolutionary algorithm.

\subsection{Path Marginalization}
While there are several ways to arrive at the same answer, typically the answer and not the path that led to it is what matters. For example, different reasoning trajectories may result in a language model writing the same code. If $x$ is the previous code and $x'$ is the new code, although only the probability of generating the final code is of interest, the model on the way also generates some reasoning trajectory, $z$, yielding $p(x',z|x)$ and not $p(x'|x)$. More generally, $z$ can be any intermediate steps towards the final answer, e.g. a set of tool calls and their returned outputs. The probability of arriving at the sampled code given the prior code, regardless of what the intermediate steps are, can be approximated as
\begin{equation}
    p(x'|x)=\underbrace{\int p(x'|z,x)p(z|x)dz}_{\mathbb{E}_{z\sim p(\cdot|x)}[p(x'|z,x)]}\approx p(x'|z,x),
\end{equation}
where the approximation is done by taking a single Monte Carlo sample from the expectation. Effectively, this means that only the probability over the relevant part of the output should be used, not the probability over the entire output. Intuitively, a mundane program generated following a very unlikely reasoning trajectory should have a reasonable generation probability, instead of being artificially down-weighted due to the unlikely reasoning that led to the program.

\subsection{Length-Respecting Uniform Prior}
When deriving DME we initially assumed the batch's samples are sampled from a uniform distribution over the space, hence the need for the $1/p(x'|x)$ importance sampling correction. However, which uniform distribution to use was not explicitly specified. Na\"ively selecting a uniform distribution over sequences artificially up-weighs long sequences, e.g. for a vocabulary of size $V$ there are $V$ sequences with one token but $V^2$ with two. To reduce this length dependence for problems where shorter solutions are likely preferred, we define a uniform prior using the following process.

Assume tokens are sampled uniformly from the vocabulary, and a sequence ends when an end-of-sequence token is sampled, and assume there is one such token in the vocabulary. Thus, the probability of sampling the token sequence making up $x$ is $U(x)=\frac{1}{V^{|x|}}$, as each token in the sequence has probability $1/|V|$ of being sampled. 

\subsection{Modified Importance Sampling Weights}
Altogether, these two modifications change the importance weights as follows. The path marginalization results in using $w=\frac{R(x')^\beta}{p(x'|z,x)}$ instead of $w=\frac{R(x')^\beta}{p(x'|x)}$, while the length-respecting uniform prior changes the numerator from $w=\frac{R(x')^\beta}{p(x'|z,x)}$ to $w=\frac{R(x')^\beta U(x')}{p(x'|z,x)}=\frac{R(x')^\beta}{V^{|x'|}p(x'|z,x)}$. Note that the previous importance sampling weights could omit the $U(x)$ term as it was constant.

The path marginalization also yields a different heuristic interpretation of DME. When $\beta\to\infty$ and the generation probabilities between different samples are very different, DME simply implements a tie-breaker for hill climbing, where the highest reward sample with the lowest probability should be picked. The path marginalization results in the probability being calculated only over the relevant part of the answer instead of the entire generated sequence. Unless specified otherwise, the path marginalization is always used by default, while the default prior is the regular uniform distribution.

\section{Experiments}
In this section we test DME's sample efficiency on several problems. Specifically, we ask what is the probability DME and other methods can solve a problem given some number of samples. For difficult discovery problems requiring many samples any method which mode collapses or struggles sampling low probability solutions will have a low probability of success.

\subsection{Regular Expressions}
\label{sec:regex}
To facilitate comparing DME to many existing methods we begin by testing its performance in a small-scale setting, generating regular expressions (regex). A regex is a short expression which accepts or rejects strings, akin to how a program can pass or fail unit tests. Regex are relatively high entropy strings, making them a good test case for rare event sampling.\footnote{For example, a regex that accepts gmail addresses is \texttt{\string^\string[\string\w.+\string\-]+\string@gmail\string\.com\$} \citep{stelmaszczyk2013gmailregex}.} A sampled regex can count as a success if it accepts a set of positive example strings (e.g. email addresses) and rejects a set of negative examples (e.g. malformed addresses).

This setup is designed to be relatively simple and quick to run, enabling reliable probability estimates when comparing different methods. Specifically, we use Qwen3-0.6B as the generative model for all search methods and cap its output to a maximum of 16 tokens, due to the regex being relatively short. All methods are given 100k samples to find the regex and run on the same hardware -- a single TitanRTX GPU -- to facilitate runtime comparisons.

The following methods are compared. Note that most of these methods perform local reward maximization and accordingly have objectives which permit mode collapse. First, we test three gradient-based discovery methods. Risk-sensitive GRPO (\textbf{RS-GRPO}) is a reinforcement-learning method with a diversity-inducing objective \citep{jiang2025risk}. Test-time training (\textbf{TTT-Discover}) is a more elaborate RL-based discovery framework \citep{yuksekgonul2026learning}. \textbf{GFlowNets} is a framework for optimizing similar distribution-matching objectives using methods from variational inference and reinforcement learning \citep{malkin2022gflownets,malkin2022trajectory,tiapkin2024generative}. The remaining five baselines and DME are all gradient-free. In Best-of-N (\textbf{BoN}), also known as random sampling, the model is sampled IID while prompted to find the regex. Sequential conditioned sampling (\textbf{SCS}) is a strong baseline for discovery methods which uses a meta-model (also Qwen3-0.6B) to propose new prompts to find the regex \citep{gideoni2026simple}. \textbf{GEPA} is a similar, more sophisticated method which proposes prompts also based on returned text-based task-specific information \citep{agrawal2026gepa}. \textbf{(1+N)} and \textbf{(1,N)} are two standard hill climbing evolutionary strategies. \textbf{DME} is run with $\beta=1000$. All methods had their setups and hyperparameters tuned, use a batch size of 64 where relevant, and where possible use the same prompts. Methods that rely on a reward signal use $e^{\text{accuracy}}$ as their reward function, where the accuracy is the percent of test cases which passed. Additional details are in Appendix \ref{app:regex}.

For the test data we use the regex dataset of \citet{siddiq2024re} and filter to 20 regexes for which best-of-N gets between a 5\% and 50\% success rate, to ensure we focus on regexes which are difficult but solvable. All methods are run 10 times per regex except for some especially slow baselines, which are run only 3 times. We report the average success rate across regexes and 95\% MOVER-aggregated confidence intervals \citep{zou2008construction}.

\begin{figure}[!h]
    \centering
    \vspace{-5pt}
    \includegraphics[width=\linewidth]{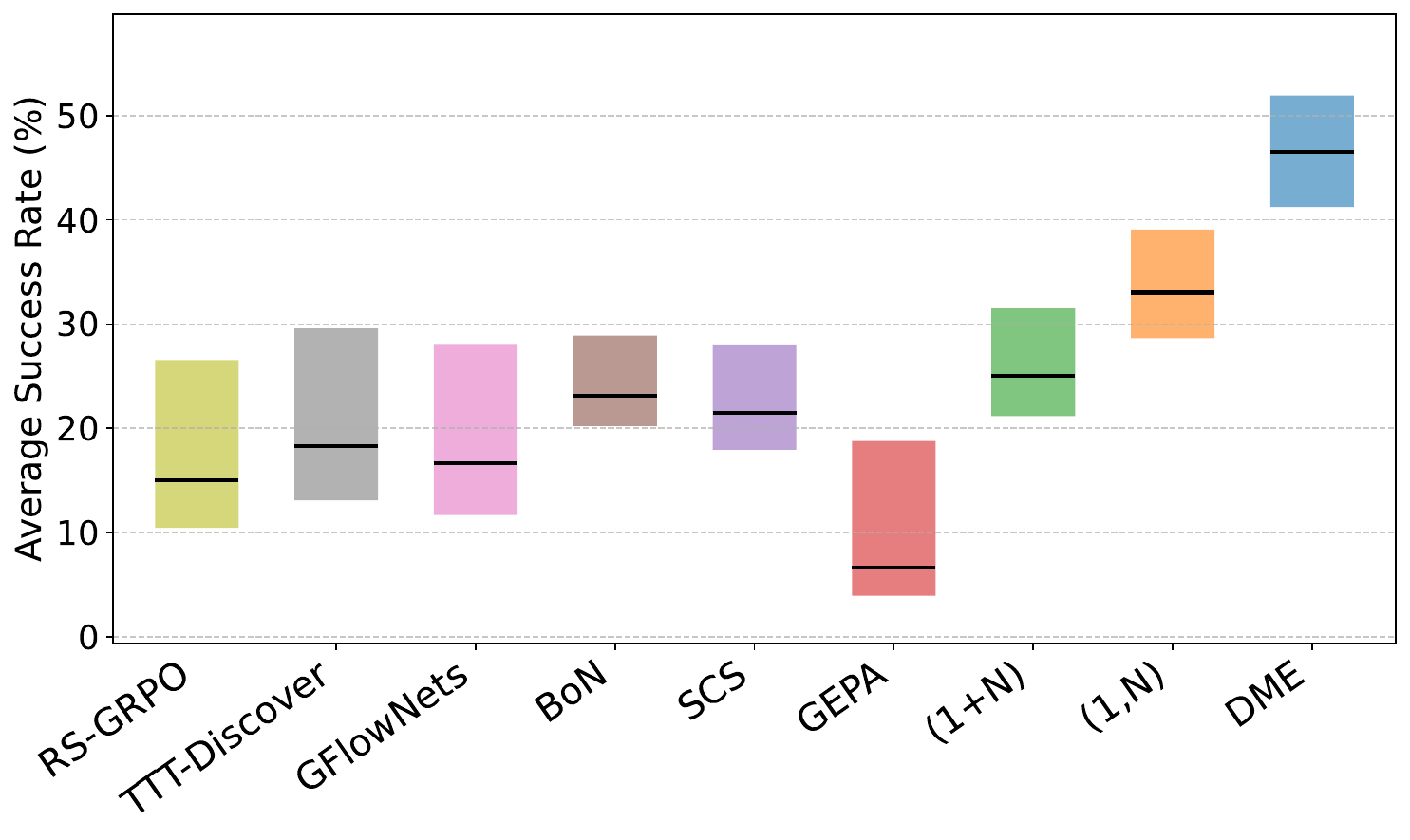}
    \vspace{-15pt}
    \caption{Average success rate across regexes for different methods. Black lines denote means, shaded regions are 95\% confidence intervals.}
    \vspace{-15pt}
    \label{fig:regex}
\end{figure}

\begin{table}[!h]
  \centering
  \label{tab:method_comp}
  \resizebox{\textwidth}{!}{%
  \begin{tabular}{lccccccccc}
      \toprule
       & RS-GRPO & TTT-Discover & GFlowNets & BoN & SCS & GEPA & (1+N) & (1,N) & DME \\
      \midrule
      Time per run (minutes)    & $154.8 \pm 41.5$ & $324.3 \pm 56.4$ & $238.8 \pm 92.6$ & $3.0 \pm 0.4$ &
      $18.1 \pm 2.9$ & $47.5 \pm 14.9$ & $6.7 \pm 1.9$ & $6.5 \pm 0.9$ & $7.2 \pm 0.5$ \\
      Markovian    & \checkmark &            & \checkmark & \checkmark
      & \checkmark &            & \checkmark & \checkmark & \checkmark \\
      Requires a meta-model                                         &            &            &            &
      & \checkmark & \checkmark &            &            &            \\
      Requires weight updates   & \checkmark & \checkmark & \checkmark &
      &            &            &            &            &            \\
      Requires log-probs        & \checkmark & \checkmark & \checkmark &
      &            &            &            &            & \checkmark \\
      \bottomrule
  \end{tabular}%
  }
\caption{Properties of different methods. DME is relatively fast, Markovian, does not need auxiliary models, and is inference only. However, it does require access to the generative model's (potentially unnormalized) density. Runtime uncertainties are standard deviations.}
\end{table}

\vspace{-3pt}
\subsubsection{Results}
Results across methods are shown in Figure \ref{fig:regex}. DME gets the highest success rate, almost  twice the success rate of best-of-N. $(1,N)$ hill climbing is perhaps the only method other than DME to perform notably better than best-of-N, being effectively a simplified greedy heuristic version of DME in the $\beta\to\infty$ limit. We speculate $(1,N)$ hill climbing outperforms $(1+N)$ due to always refreshing the regex used as hint for generating the next batch, thus making its search more varied. $(1+N)$ changes its hint only if it generates an improvement over the best regex seen so far.

It is surprising that GFlowNets have a low success rate given they are the only method other than DME which uses a distribution matching objective. We speculate GFlowNets' low success is due to the gradient-based methods being unstable and mode collapsing even given distribution matching objectives. Empirically, the three tested baselines all work best only given very low learning rates -- $10^{-7}$ for TTT-Discover and GFlowNets, $10^{-6}$ for RS-GRPO -- with higher learning rates leading all three to mode collapse.

\vspace{-3pt}
\subsection{Coding Competitions}
For a more difficult, real-world task, we measure how well DME can solve problems from coding competitions. Specifically, we measure to what extent DME can enable a model to find correct answers which would otherwise require many samples to generate.

We take a random 300 question subset of LiveCodeBench \citep{jain2025livecodebench} and compare DME to $(1,N)$ and $(1+N)$ evolutionary strategies as they performed best for the regex task. DME uses $\beta=1000$ and the length-respecting prior, as coding competition problems often have succinct solutions -- see Appendix \ref{app:code_comp} for ablations and details. All methods have a batch size of 8. The proposal model is Qwen2.5-Coder-1.5B Instruct \citep{hui2024qwen2}. As in the regex case, the reward is $e^\text{accuracy}$, where a given solution's accuracy is the percent of unit tests that pass.

Results are shown in Figure \ref{fig:lcb}. DME is empirically more sample efficient than the evolutionary strategies, requiring fewer programs to solve the same fraction of problems. For example, DME reaches a $\sim30\%$ solve rate given $50$ programs, whereas both evolutionary strategies require generating $\sim100$ programs per question to reach the same performance.

\begin{figure}[!h]
    \vspace{-8pt}
    \centering
    \includegraphics[width=0.7\linewidth]{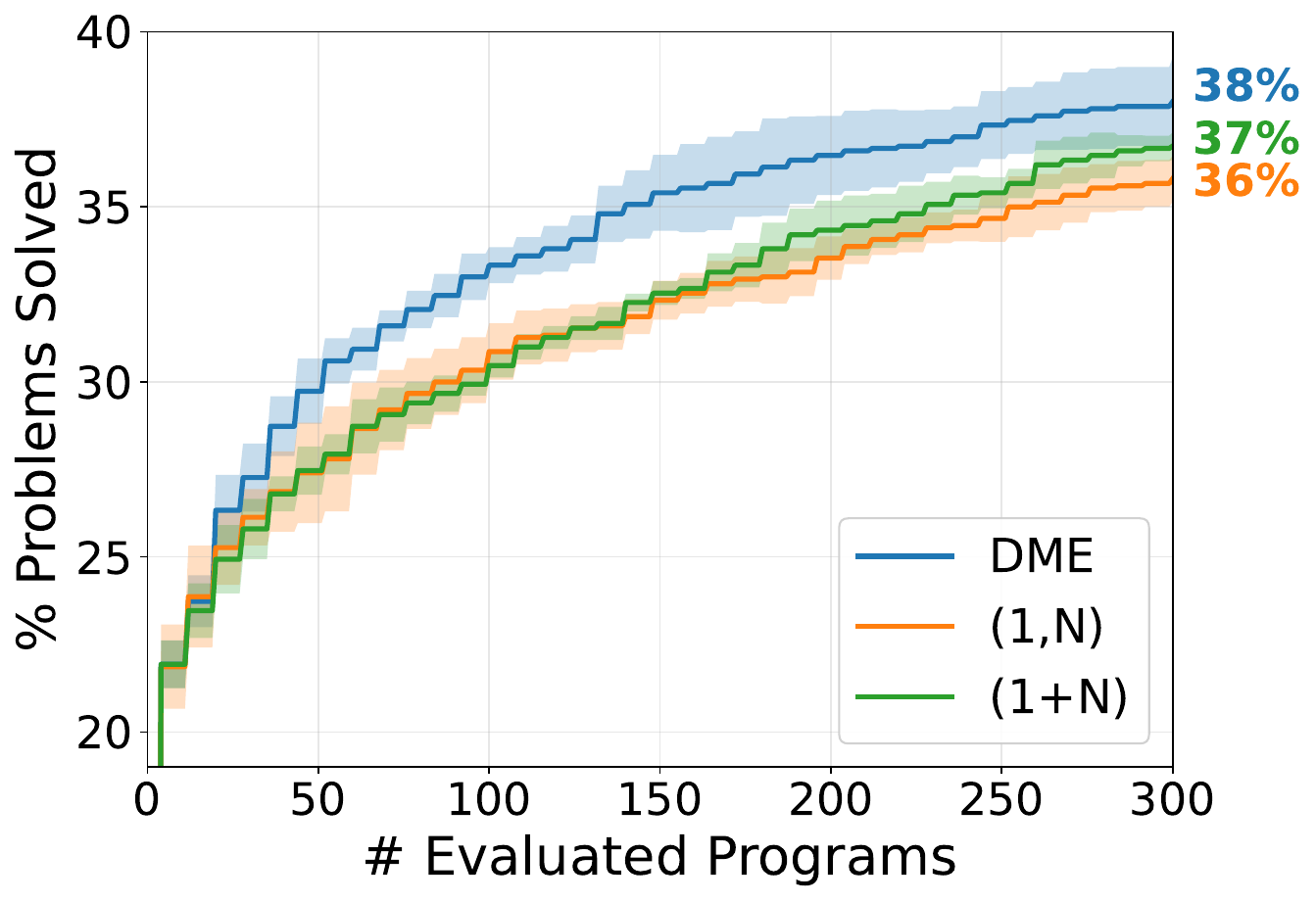}
    \vspace{-15pt}
    \caption{Different methods' solve-rate for competitive coding problems as a function of the number of generated programs per question. Shaded regions denote the standard deviation over 5 seeds.}
    \label{fig:lcb}
    \vspace{-15pt}
\end{figure}

\section{Math Bounds}
\label{sec:math_bounds}
AlphaEvolve \citep{novikov2025alphaevolve} demonstrated how code generation can be used to discover new mathematical bounds. For certain problems, finding a bound is analogous to a black-box optimization problem, where a verifier gets as input a list of numbers and outputs a corresponding mathematical bound. A language model can then either generate code which directly constructs this list of numbers or finds it using various optimization methods. 

We test DME's sample efficiency on 9 of the problems analyzed in \citet{novikov2025alphaevolve}, comparing it to two strong open-source baselines benchmarked by \citet{gideoni2026simple}. The first baseline is best-of-N (BoN), while the second is a more sequential version thereof called sequential-conditioned-sampling (SCS). \citet{gideoni2026simple} found SCS to be a strong baseline when compared to open-source versions of AlphaEvolve, like ShinkaEvolve \citep{lange2026shinkaevolve}. 

All methods are given a sample budget of 5000 programs, sampling from GPT-OSS 20B \citep{agarwal2025gpt}. GPT-OSS' knowledge cutoff is about a year before AlphaEvolve came out, so \citet{novikov2025alphaevolve}'s solutions cannot be memorized. SCS hyperparameters are kept at their proposed defaults. DME uses $\beta=10^6$ and to reduce runtime uses four chains or, as they are known in evolutionary algorithms parlance, four islands. All methods use the same prompt templates, and a temperature and top-$p$ of 1. Following \citet{gideoni2026simple}, methods are considered tied if their results are within floating point errors as per \texttt{numpy.isclose}.

\begin{table}[h]
    \centering
    \vspace{-6pt}
    \begin{tabular}{L{4.4cm} c c c}
        Problem & BoN & SCS & DME \\
        \midrule
        \# problems DME $\geq$ method &  9/9 & 9/9 &  \\
        Average rank \arrowtag{\downarrow}  & 2.6 & 1.8 & 1.0 \\
        \midrule
        First autocorr. ineq. \arrowtag{\downarrow}
            & 1.524
            & 1.517
            & \textbf{1.510} \\

        Second autocorr. ineq. \arrowtag{\uparrow}
            & 0.8779
            & \underline{0.8784}
            & \textbf{0.8868} \\

        Uncertainty ineq. \arrowtag{\downarrow}
            & \textbf{0.3521}
            & \textbf{0.3521}
            & \textbf{0.3521} \\

        Erd\H{o}s' min. overlap \arrowtag{\downarrow}
            & 0.3815
            & \underline{0.3812}
            & \textbf{0.3809} \\

        Sums/diffs of sets \arrowtag{\uparrow}
            & 1.110
            & \underline{1.114}
            & \textbf{1.117} \\

        Max--min dist. ratio \arrowtag{\downarrow}
            & \textbf{12.8892}
            & \textbf{12.8892}
            & \textbf{12.8893} \\

        Heilbronn triangles \arrowtag{\uparrow}
            & 0.0339
            & \underline{0.0360}
            & \textbf{0.0365} \\

        Kissing number in 11D \arrowtag{\uparrow}
            & 348
            & \underline{566}
            & \textbf{582} \\

        Circle packing \arrowtag{\uparrow}
            & 2.62304
            & \underline{2.63109}
            & \textbf{2.63598}
    \end{tabular}
    \vspace{-5pt}
    \caption{Best discovered bounds by different methods when given 5000 samples. DME either finds the best bound or ties for first place for all problems. Arrows denote whether higher/lower is better. For all problems except circle packing results are truncated to the first different significant digit. Best results are bolded, second best are underlined.}
    \label{tab:results_math}
    \vspace{-5pt}
\end{table}

Table \ref{tab:results_math} shows the results. Relative to the two baselines, DME finds the best bound for all problems, either finding the best outright or tying for first place with a different method.

To see how DME performs over a range of budgets we compute the probability of improvement between DME and the two baselines as a function of the number of generated programs. This comparison asks ``when picking a random problem, what is the probability DME will outperform a given method?'', with ties counting as each method having an equal probability of winning \citep{agarwal2021deep}. Figure \ref{fig:prob_imp} shows that for almost all sampling budgets DME is likely to improve over both baselines.

\begin{figure}[h]
    \centering
    \vspace{-6pt}
    \includegraphics[width=0.7\linewidth]{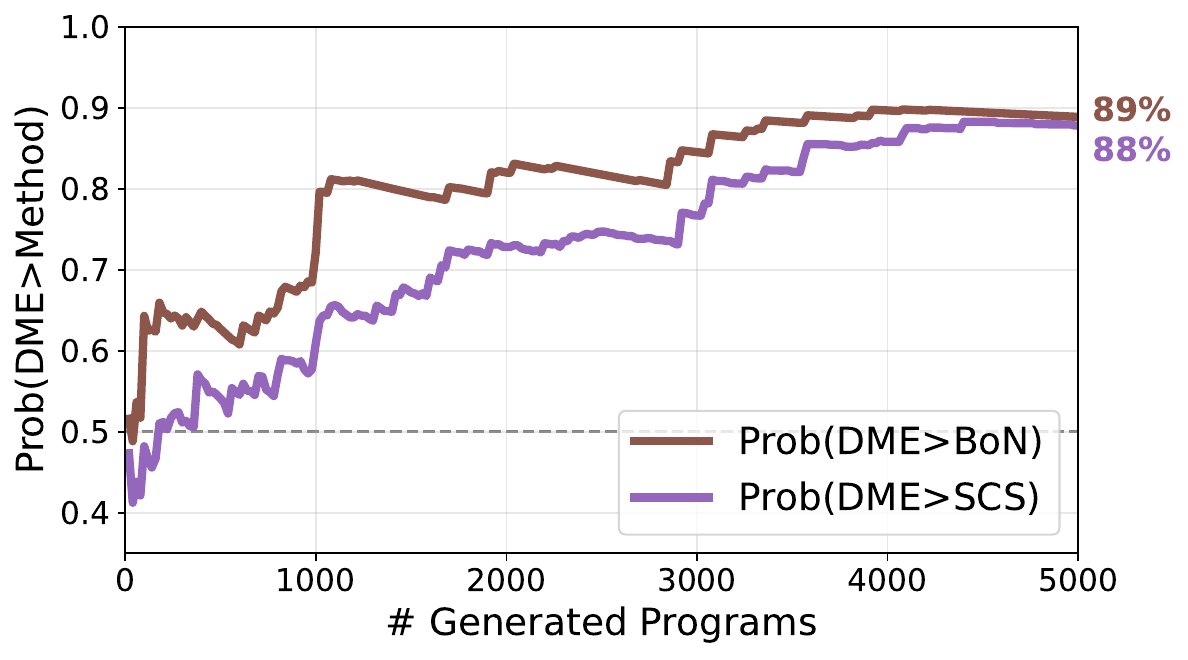}
    \vspace{-10pt}
    \caption{DME's probability of improvement over both baselines as a function of the number of generated programs. For most sample budgets DME is likely to outperform both baselines. For per problem breakdowns see Appendix \ref{app:math_per_problem}.}
    \label{fig:prob_imp}
    \vspace{1pt}
\end{figure}

\vspace{-3pt}
\section{Related Work}
DME is an MCMC-based method to optimize distribution-matching objectives. These same objectives can be directly optimized using variational inference or reinforcement learning if weight updates are acceptable. To illustrate, given a target distribution $\pi$ and a parameterized distribution $p_\theta$, the minimum of
\begin{equation}
    \text{KL}(p_\theta||\pi)=\mathbb{E}_{p_\theta}[\log p_\theta-\log\pi]=\mathbb{E}_{p_\theta}[-\log\pi]-H(p_\theta)
\end{equation}

is achieved when $\pi=p_\theta$, where $H(p)$ denotes $p$'s entropy. Note that akin to DME, this objective does not require $\pi$ to be normalized. When substituting the DME target distribution $\pi(x)\propto R(x)^\beta$ and taking the KL's gradient shows how optimizing this objective is equivalent to a certain form of soft reinforcement learning. Explicitly, with the DME target distribution, $\text{KL}(p_\theta||\pi)=-\beta\mathbb{E}_{p_\theta}[\log R(x)]+\log Z-H(p_\theta)$. Rescaling by $\frac{1}{\beta}$ and setting $R'(x)\coloneqq \log R(x),\alpha\coloneqq \frac{1}{\beta}$, one finds that:
\begin{equation}
    -\nabla_\theta \alpha \text{KL}(p_\theta||\pi)=\mathbb{E}_{p_\theta}[R'(x)\nabla_\theta \log p_\theta]+\alpha\nabla_\theta H(p_\theta),
\end{equation}

which is exactly a soft reinforcement learning policy gradient \citep{ziebart2010modeling, levine2018reinforcement}.

There are various ways to optimize $\text{KL}(p_\theta||\pi)$ or other distribution matching objectives. The most closely related method to DME is GFlowNets, which propose a variety of losses to train a generative model to sample proportionally to some reward \citep{bengio2021flow,malkin2022trajectory,madan2023learning}. Beyond requiring weight updates, GFlowNets differ from DME in that some formulations require only samples from the generative model and not densities, the sampling is amortized, and in some cases GFlowNets are known to be more sample efficient than MCMC based methods \citep{bengio2021flow}. Some other rare event sampling methods, such as the cross-entropy method \citep{rubinstein1997optimization}, optimize $\text{KL}(\pi||p_\theta)$. Surprisingly, the cross-entropy method was found to perform worse than other gradient-based methods for the regex experiments in section \ref{sec:regex}, leading to its exclusion.

DME is also related to some symbolic code optimization methods. Superoptimization is the process of finding an optimal computer program for some task using exhaustive search methods \citep{massalin1987superoptimizer}. \citet{schkufza2013stochastic} propose stochastic superoptimization, a method which uses MCMC over a finite instruction set to more efficiently find an optimal program. DME is in a sense \textit{generative} stochastic superoptimization, as it similarly uses MCMC to find optimal programs, while relying on generative models to increase the sample efficiency further.

DME is related to various MCMC algorithms. Most closely related is iterated-sampling-importance-resampling (i-SIR), which similarly samples $N$ candidates and then resamples a proposal from them weighted by some importance sampling weights \citep{tjelmeland2004using,andrieu2010particle}; for an overview, see \citet{laitinen2025iterated}. i-SIR was originally proposed to more efficiently calculate expectations relative to traditional Metropolis-Hastings MCMC, by using both rejected and accepted candidates \citep{tjelmeland2004using}. Similarly, upon completion DME can return a sample that was originally rejected if it has the highest seen reward.

Many prior works using generative models have found discoveries using various heuristic evolutionary algorithms \citep{romera2024mathematical,novikov2025alphaevolve,openai2026tenadvances}. These discoveries are likely enabled by a combination of using strong underlying generative models and generating many samples, thus sampling lower probability completions. Supporting this claim, \citet{gideoni2026simple,tsoukalas2026advancing} find that simple search heuristics perform comparably well to more complicated evolutionary pipelines given the same generative model and sample budgets. \citet{gideoni2026simple} further note that a problem's formulation can make it more amenable to automated search, so a better problem formulation can make a method more sample efficient.

\section{Discussion}
There are several straightforward extensions to DME. Incorporating other methods from either the MCMC or evolutionary algorithms literature could prove fruitful. If using gradients is permissible, it would be interesting using Hamiltonian Monte Carlo as a potentially much more efficient variant of DME. Another interesting approach would be controlling to what extent DME upweighs low probability samples by tempering the generative model's probability in the importance weights. This could be easily achieved by setting $p\to p^\alpha$ for some $0<\alpha\leq1$. All this is left for future work.

It would be interesting to test distribution matching methods further on cases with noisy, surrogate, or otherwise inexact rewards. Surrogate rewards are especially important when a real experiment is either expensive or takes a long time, in which case discovery methods can help proposing potential experiments. The expensive to evaluate, real reward might be maximized by a sample which seems suboptimal given a cheap proxy. In this case it is important to propose a diverse set of candidates which have a high but not only a maximal reward per the proxy. While local reward-maximization methods would reward-hack the proxy, distribution matching methods can be tuned to sample per various reward distributions.

One limitation of DME is it requiring (unnormalized) probability densities. For some generative models finding the density is expensive, i.e. diffusion/flow matching models require at least an extra forward pass per step for calculating a generation's density \citep{grathwohl2018scalable}. Additionally, not all closed-source language models return their log-probabilities. While likely possible adapting DME for these settings, doing so would require deriving a new transition kernel and be potentially nontrivial.

More broadly, all training-free methods have some clear limitations. Any discovery which has a sufficiently low probability, likely due to being extremely dissimilar to the model's training data, will still require a potentially unreasonable number of samples to find. Practically, training-free discovery methods are better thought of as ways to expand what a model can find given some number of samples, with new discoveries then being added to the model's training data. A later finetuned or retrained generative model will then have an effectively larger set of reachable discoveries. Such an iterative procedure of expanding the set of reachable discoveries is sometimes described as how science progresses \citep{kaufman2000investigations}.

\subsection*{Acknowledgements}
We would like to thank Dulhan Jayalath and Lazar Đoković for giving feedback on an earlier version of this work. We thank Thinking Machines for a generous grant that enabled the math bounds experiments. Yonatan is funded by the Rhodes Trust and the AIMS EPSRC CDT (grant no. EP/S024050/1).

\bibliographystyle{plainnat}
\bibliography{bib}

\newpage

\appendix

\section{Proofs}
\label{app:proofs}

The formal version of theorem \ref{thm:first_corr}, including the necessary assumptions on the various distributions, is
\setcounter{theorem}{0}
\begin{theorem}[First order correction to $q(x'|x)$]
    Define the transition kernel $q(x'|x)$ using the sampling-resampling process given in Algorithm \ref{alg:dme}. Assume that $p(x'|x)$ has full support, that for all $x$ the $\chi^2$-divergence $\chi^2(\pi||p(\cdot|x))=\int \left(\frac{\pi(x')-p(x'|x)}{p(x'|x)}\right)^2 p(x'|x)dx'$ is finite, and that the importance weight's third absolute moment is finite, so $\mathbb{E}_{-i}[|w_j-Z|^3]<\infty$. Then, one has that
    \begin{equation}
        q(x'|x)=\pi(x')\left(1+\frac{D(x'|x)}{N}\right)+O\left(N^{-3/2}\right),
    \end{equation}
    where $D(x'|x)\coloneqq \int\frac{\pi(y)^2}{p(y|x)}dy-\frac{\pi(x')}{p(x'|x)}$.
\end{theorem}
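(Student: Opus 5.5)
We will write the resampling kernel exactly and then expand it in $1/N$. Fix $x$ and abbreviate $p(\cdot)=p(\cdot|x)$. Let $w(y)=\tilde\pi(y)/p(y)$, where $\tilde\pi=Z\pi$ is the unnormalized target $R^\beta$, so that $\mathbb{E}_{y\sim p}[w(y)]=Z$. By exchangeability of the $N$ candidates, the density of the selected $x'$ equals $N$ times the density that candidate $1$ equals $x'$ and is selected:
\begin{equation*}
q(x'|x)=N\,p(x')\,\mathbb{E}_{-1}\!\left[\frac{w(x')}{w(x')+\sum_{j=2}^N w_j}\right]=\pi(x')\,\mathbb{E}_{-1}\!\left[\frac{NZ}{w(x')+S_{N-1}}\right],
\end{equation*}
where $S_{N-1}=\sum_{j\ge2}w_j$ with $w_j=w(x_j)$ i.i.d. under $p$, and $\mathbb{E}_{-1}$ (written $\mathbb{E}_{-i}$ in the statement) is the expectation over the other candidates. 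We used $p(x')w(x')=Z\pi(x')$. Full support of $p$ ensures every $w_j$ is well defined.

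Next we expand the denominator around its mean. Write $w(x')+S_{N-1}=NZ(1+\epsilon)$ with
\begin{equation*}
\epsilon=\frac{(w(x')-Z)+\sum_{j\ge2}(w_j-Z)}{NZ}.
\end{equation*}
Then $\mathbb{E}[\epsilon]=(w(x')-Z)/(NZ)$. Its second moment is $\mathbb{E}[\epsilon^2]=\frac{(N-1)\mathrm{Var}(w)}{N^2Z^2}+O(N^{-2})$, and $\mathrm{Var}(w)/Z^2=\chi^2(\pi\|p)$, which is finite by assumption. Using $1/(1+\epsilon)=1-\epsilon+\epsilon^2-\epsilon^3/(1+\epsilon)$ we get
\begin{equation*}
\mathbb{E}\!\left[\frac{1}{1+\epsilon}\right]=1-\frac{w(x')-Z}{NZ}+\frac{\chi^2(\pi\|p)}{N}+\text{remainder}.
\end{equation*}
We also have $\chi^2(\pi\|p)=\int\pi^2/p-1$ and $w(x')/Z=\pi(x')/p(x')$. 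Collecting terms, the $1/N$ coefficient is $\int\pi^2/p-\pi(x')/p(x')=D(x'|x)$, which gives the claimed leading form.

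The main obstacle is controlling the remainder $\mathbb{E}[\epsilon^3/(1+\epsilon)]$ at order $N^{-3/2}$.

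\emph{Good event.} Take $\{|\epsilon|\le 1/2\}$. On it the remainder is bounded by $2\mathbb{E}|\epsilon|^3$. The Marcinkiewicz--Zygmund (or Rosenthal) inequality together with the finite third absolute moment gives $\mathbb{E}\bigl|\sum_j(w_j-Z)\bigr|^3=O(N^{3/2})$, hence $\mathbb{E}|\epsilon|^3=O(N^{-3/2})$.

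\emph{Bad event.} Off this event we cannot use $1/(1+\epsilon)\le2$. Since weights are nonnegative, we still have $0\le NZ/(w(x')+S_{N-1})\le NZ/S_{N-1}$, so the integrand is bounded by $N$ times a ratio that is finite as long as $S_{N-1}>0$. The event $S_{N-1}<NZ/2$ has probability $O(N^{-3/2})$ by Chebyshev with third moments, but we also need the size of $1/S_{N-1}$ there. I would first try a small-ball bound showing $\mathbb{E}[(NZ/S_{N-1})\mathbf{1}\{S_{N-1}<NZ/2\}]$ is exponentially small, for instance by a Laplace-transform argument, since $\mathbb{E}[1/S]=\int_0^\infty\mathbb{E}[e^{-tS}]\,dt$ and $\mathbb{E}[e^{-tw}]^{N-1}$ decays geometrically. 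If that fails, I would add a mild assumption that the weights are bounded below away from zero, or interpret the $O(N^{-3/2})$ pointwise in $x'$ with constants depending on $w(x')$.

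Finally, the $O(N^{-2})$ corrections from the $(N-1)/N$ factors and the cross term $\mathbb{E}[\epsilon]^2$ are absorbed into the $O(N^{-3/2})$ error, which completes the proof.
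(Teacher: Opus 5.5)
Up to the bad event your argument is sound and follows the paper's proof: the exchangeability formula, the third-order expansion with exact remainder, and Rosenthal on the good event. Centering at $NZ$ rather than at $w(x')+(N-1)Z$ is a harmless simplification that spares you the paper's second expansion of $1/(w_i+S)$.

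The gap is the bad event. You leave it open, and the route you try there fails under the stated hypotheses. Bounding $NZ/(w(x')+S_{N-1})$ by $NZ/S_{N-1}$ discards the only term that keeps the integrand finite. The assumptions do not bound the weights away from zero, and $\mathbb{E}[1/S_{N-1}]$ can be infinite for every $N$. For instance, take $p$ uniform on $(0,1]$ and $R(y)^\beta=e^{-1/y}$. Then $w\le 1$, so all moment and $\chi^2$ conditions hold, but $\Pr(w<\delta)=1/\log(1/\delta)$. Hence $\Pr(S_{N-1}<s)\ge(\log((N-1)/s))^{-(N-1)}$ and $\mathbb{E}[(NZ/S_{N-1})\mathbf{1}\{S_{N-1}<NZ/2\}]=\infty$, so no Laplace-transform argument can make this quantity small. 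Your Chebyshev estimate is also insufficient by itself: even with an $O(N)$ bound on the integrand, you only get $O(N)\cdot O(N^{-3/2})=O(N^{-1/2})$.

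The missing idea is to keep $w(x')$. Since the weights are positive, $w(x')+S_{N-1}\ge w(x')>0$. On $\{\epsilon<-1/2\}$, where also $|\epsilon|<1$, the remainder is therefore at most $NZ/w(x')=O(N)$ for fixed $x'$. This is the pointwise-in-$x'$ reading you mention only as a fallback, and it is exactly what the paper uses. You then need $\Pr(S_{N-1}<NZ/2)$ to be exponentially small, not merely $O(N^{-3/2})$.

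The paper gets this by truncating, $y_j=\min(w_j,M)$ with $M$ large enough that $\mathbb{E}[y_j]\ge\tfrac34 Z$, and applying Hoeffding to the bounded $y_j$. Your own geometric-decay observation also works if you apply it to the probability rather than to $\mathbb{E}[1/S]$. Indeed $\Pr(S_{N-1}<NZ/2)\le e^{tNZ/2}\,\mathbb{E}[e^{-tw}]^{N-1}$, and $\mathbb{E}[e^{-tw}]\le 1-tZ+\tfrac{t^2}{2}Z^2(1+\chi^2(\pi\|p(\cdot|x)))$. Choosing $t=1/(2Z(1+\chi^2(\pi\|p(\cdot|x))))$ gives a bound of order $e^{-cN}$.

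Only this lower tail needs the argument. On $\{\epsilon>1/2\}$ you have $1/(1+\epsilon)<1$, so your third-moment bound already covers it; your remark that off the good event you ``cannot use $1/(1+\epsilon)\le 2$'' applies to one side only.
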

\begin{proof}[Proof of Theorem \ref{thm:first_corr}, First order correction to $q(x'|x)$]
    We wish to find an exact analytic form for $q(x'|x)$ based on the process used in DME, i.e. Algorithm \ref{alg:dme}. For simplicity we denote $x_i$ simply as $x$. Given $\{x'_1,...,x'_N\}$, the probability of sampling $x'_1$ given each sample has weight $w_i\coloneqq \frac{R(x_i')^\beta}{p(x'_i|x)}$ is, as before, $p_i\coloneqq \frac{w_i}{\sum_{j=1}^N w_j}$. Thus, as $x'_i\sim p(\cdot|x)$, the probability of sampling $x'_i$ from the joint sampling process is $Np_i p(x'_i|x)$, where the $N$ factor comes from the $N$ samples being exchangeable. Note that $p_i$ implicitly still relies on the entire set $\{x_1,...,x_N\}$. Throughout the proof we will take many expectations of the form $\mathbb{E}_{x_1,...,x_{i-1},x_{i+1},...,x_N\sim p(\cdot|x)}[\cdot]$. For convenience, denote $\mathbb{E}_{-i}\coloneqq \mathbb{E}_{x_1,...,x_{i-1},x_{i+1},...,x_N\sim p(\cdot|x)}$.
    
    Denoting $x'=x'_i$ and marginalizing over the remaining $x'_j$ gives the exact $x\to x'$ transition probability:
    \begin{equation}
    \label{eqn:messy_trans_kernel}
        q(x'|x)=Np(x'|x)\mathbb{E}_{-i}[p_i]=NR(x')^\beta \mathbb{E}_{-i}\left[\frac{1}{w_i+\sum_{j=1,j\not=i}^N w_j} \right],
    \end{equation}
    where the expectation is over all $j\not=i$. This form of the transition probability is difficult to use as the expectation is over a fraction. To reach an analytic formula we wish to express the fraction as an expansion, so the expectation can act on polynomial terms of $w_j$. As a prologue to such an expansion, note that:
    \begin{equation}
        \mathbb{E}_{-i}[w_j]=\int \frac{R(x'_j)^\beta}{p(x'_j|x)}p(x'_j|x)dx'_j = \int R(x'_j)^\beta dx'_j=Z,
    \end{equation}
    where recall that $Z$ is the normalization constant of the target distribution, $\pi(x)$. Thus, denote $S\coloneqq(N-1)Z$ and $\Delta\coloneqq \sum_{j=1,j\not=i}^N w_j-S$. Note that $\mathbb{E}_{x'_j\sim p(\cdot|x),j\not=i}[\Delta]=0$, by design. The rest of the proof boils down to straightforward if somewhat tedious algebra and a truncated Taylor expansion around $\frac{\Delta}{w_i+S}=0$. 
    
    Using the new notation, one has that:
    \begin{equation}
        \mathbb{E}_{-i}\left[\frac{1}{w_i+\sum_{j=1,j\not=i}^N w_j} \right]=\mathbb{E}_{-i}\left[\frac{1}{w_i+S+\Delta} \right]=\frac{1}{w_i+S}\mathbb{E}_{-i}\left[\frac{1}{1+\frac{\Delta}{w_i+S}} \right].
        \label{eqn:exp_pre_expand}
    \end{equation}
     Now, note that as $w_i+S+\Delta=\sum_{j=1}^N w_j>0$ then the denominator is always positive and the expectation has no poles. Taking an exact finite Taylor expansion around $\frac{\Delta}{w_i+S}=0$ gives:
     \begin{equation}
     \label{eqn:delta_taylor_exp}
         \left(1+\frac{\Delta}{w_i+S}\right)^{-1}=1-\frac{\Delta}{w_i+S}+\left(\frac{\Delta}{w_i+S}\right)^2-\frac{\left(\frac{\Delta}{w_i+S}\right)^3}{1+\frac{\Delta}{w_i+S}},
     \end{equation}
     where we used the expansion $(1+x)^{-1}=1-x+x^2-\frac{x^3}{1+x}$ \citep[Eq.~(1.1.8)]{atkinson2008introduction}. Plugging the first two terms back into the expectation in equation \ref{eqn:exp_pre_expand} gives:
     \begin{equation}
         \mathbb{E}_{-i}\left[1\right]=1, \quad 
         \mathbb{E}_{-i}\left[\frac{\Delta}{w_i+S}\right]=0,
     \end{equation}
     as $\mathbb{E}_{-i}[\Delta]=0$. For the third term, we have that:
     \begin{equation}
     \label{eqn:exp_delta_sq}
         \mathbb{E}_{-i}\left[\Delta^2\right]
         =\mathbb{E}_{-i}\left[\sum_{j,k=1,j,k\not=i}^Nw_jw_k\right]-S^2.
     \end{equation}
     For the $(N-1)^2-(N-1)$ terms where $j\not=k$ one has that $\mathbb{E}_{-i}\left[w_jw_k\right]=\mathbb{E}_{-i}\left[w_j\right]\mathbb{E}_{-i}\left[w_k\right]=Z^2,$
    which leaves only the $N-1$ remaining $\mathbb{E}_{-i}\left[w_j^2\right]$ terms. Note that:
    \begin{equation}
    \label{eqn:exp_w_sq}
        \mathbb{E}_{-i}\left[w_j^2\right]
        =\int \frac{R(x'_j)^{2\beta}}{p(x'_j|x)^2}p(x'_j|x)dx'_j=Z^2\int \frac{\pi(y)^2}{p(y|x)}dy,
    \end{equation}
    where for convenience the integrand is denoted as $y$ instead of $x'_j$. Plugging these results back into equation \ref{eqn:exp_delta_sq} gives:
    \begin{equation}
        \begin{aligned}
            \mathbb{E}_{-i}\left[\Delta^2\right]
            &= \left((N-1)^2-(N-1)\right)Z^2+(N-1)\mathbb{E}_{-i}\left[w_j^2\right]-S^2 \\
            &= (N-1)Z^2\left(
                N-2+\int \frac{\pi(y)^2}{p(y|x)}\,dy-(N-1)
            \right) \\
            &= (N-1)Z^2\left(
                \int \frac{\pi(y)^2}{p(y|x)}\,dy-1
            \right)
        \end{aligned}
    \end{equation}
    Now, temporarily denoting $I\coloneqq \int \frac{\pi(y)^2}{p(y|x)}\,dy-1$, one has that the expectation of the second order term in equation \ref{eqn:delta_taylor_exp} is:
    \begin{equation}
    \label{eqn:exp_second_order}
        \mathbb{E}_{-i}\left[\left(\frac{\Delta}{w_i+S}\right)^2\right]=\frac{(N-1)Z^2I}{\left(w_i+S\right)^2}.
    \end{equation}
    Specifically, note that $\chi^2(\pi||p(\cdot|x))=I$, which is assumed to be finite and hence well defined.
    
    In the large $N$ limit it is clear that $\frac{(N-1)Z^2}{(w_i+S)^2}\approx \frac{1}{N}$. Thus, this expression can be slightly cleaned up if discarding higher order terms in $N$. More precisely, note that:
    \begin{equation}
        \frac{(N-1)Z^2}{(w_i+S)^2} = \frac{(N-1)Z^2}{(w_i+(N-1)Z)^2} = \frac{1}{N}+O(N^{-2}),
    \end{equation}
    as $w_i=O(1)$.
    Thus, equation \ref{eqn:exp_second_order} can be simplified to:
    \begin{equation}
    \label{eqn:exp_sec_order_simple}
        \mathbb{E}_{-i}\left[\left(\frac{\Delta}{w_i+S}\right)^2\right]=\frac{I}{N}+O(N^{-2}).
    \end{equation}
    
    Now all that remains is to bound the third order remainder term in equation $\ref{eqn:delta_taylor_exp}$ and aggregate results. Specifically, we wish to bound:
    \begin{equation}
        \left|\mathbb{E}_{-i}\left[\frac{\left(\frac{\Delta}{w_i+S}\right)^3}{1+\frac{\Delta}{w_i+S}}\right]\right|
        <\mathbb{E}_{-i}\left[\left|\frac{\left(\frac{\Delta}{w_i+S}\right)^3}{1+\frac{\Delta}{w_i+S}}\right|\right]
        =\mathbb{E}_{-i}\left[\left|\frac{\Delta^3}{(w_i+S)^2(w_i+S+\Delta)}\right|\right]
        <\mathbb{E}_{-i}\left[\left|\frac{\Delta^3}{S^2(w_i+S+\Delta)}\right|\right].
    \end{equation}
    This last expression is nontrivial to bound as $w_i+S+\Delta$, while necessarily positive, could be close to zero. To bound it we split the expectation into cases, where either $\Delta\geq-S/2$ or $\Delta<-S/2$. The first case is ``good'' in that the denominator is clearly nonzero, so we shall denote it with the indicator function $\mathds{1}_G$. Similarly, the second case is``bad'' in that the denominator could be close to $0$, so we will denote it with $\mathds{1}_B$, where clearly $\mathds{1}_G+\mathds{1}_B=1$ as the two cases are complementary. In the good case we have that:
    \begin{equation}
        \left|\frac{\Delta^3}{S^2(w_i+S+\Delta)}\right|\leq2\left|\frac{\Delta}{S}\right|^3.
    \end{equation}
    Now, we wish to bound $\left|\frac{\Delta}{S}\right|^3$. As the $w_j$ are IID and we assume finite third absolute moments ($\mathbb{E}_{-i}[|w_j-Z|^3]<\infty$), for some constant $C$ Rosenthal's inequality \citep{rosenthal1970subspaces} gives:
    \begin{equation}
        \mathbb{E}_{-i}[|\Delta|^3]\leq C\left(\sum_{j\not=i}\mathbb{E}_{-i}[|w_j-Z|^3]+\left(\sum_{j\not= i}\mathbb{E}_{-i}[(w_j-Z)^2]\right)^{3/2}\right),
    \end{equation}
    where note that each sum has $O(N)$ terms. Thus, ignoring constant factors:
    \begin{equation}
        \mathbb{E}_{-i}[|\Delta|^3]\leq C\left((N-1)O(1)+\left((N-1)O(1)\right)^{3/2}\right)=C\left(O(N)+\left(O(N)\right)^{3/2}\right)=O(N^{3/2}),
    \end{equation}
    which as $S=(N-1)Z$ implies $\mathbb{E}_{-i}[|\Delta/S|^3]=O(N^{-3/2})$ for the part of the expectation where $\Delta\geq-S/2$.

    As for the bad region, we shall show that the probability of $\Delta<-S/2$ is very small, so the contribution from the bad region is negligible. 
    
    Explicitly, note that $-S/2>\Delta=\sum_{j\not=i}w_j-S>-S$ as $w_j>0$. Thus, $|\Delta|<S$ and we have that:
    \begin{equation}
        \left|\frac{\Delta^3}{S^2(w_i+S+\Delta)}\right|<\frac{S^3}{S^2w_i}=\frac{S}{w_i}=O(N),
    \end{equation}
    as $w_i$ is constant and $S=(N-1)Z=O(N)$. Thus, we have that
    \begin{equation}
        \mathbb{E}_{-i}\left[\mathds{1}_B\left|\frac{\Delta^3}{S^2(w_i+S+\Delta)}\right|\right]
        <O(N)\mathbb{E}_{-i}[\mathds{1}_B]
        =O(N)\text{Pr}(\Delta<-S/2)
    \end{equation}
    Thus, we wish to find a sufficiently tight bound for $\text{Pr}(\Delta<-S/2)$.
    
    Note that $\Delta<-S/2 \implies |\Delta| > S/2$. Intuitively, for large $N$ the probability of the sum of displacements ($\Delta$) being larger than its mean by a constant factor (in this case $\frac{1}{2}$) is extremely low. Practically, we bound $\text{Pr}(\Delta<-S/2)$ using Hoeffding's inequality.

    Hoeffding's inequality requires bounded random variables, but $w_j$ is generically unbounded. Thus, for some $M>0$ define $y_j\coloneqq \min(w_j,M)$. As $w_j>0$ we have that $0<y_j\leq M$ and is thus bounded. As by definition $y_j\leq w_j$ we have that $\Delta+S=\sum_{j\not=i}w_j<S/2\Rightarrow \sum_{j\not=i}y_j<S/2$. Moreover, note that $\text{Pr}(\sum_{j\not=i}w_j<S/2)\leq\text{Pr}(\sum_{j\not=i}y_j<S/2)$, as if the first inequality holds ($\sum_{j\not=i}w_j<S/2$) the second inequality ($\sum_{j\not=i}y_j<S/2$) holds as well. Recall now that $M$ is arbitrary, thus we can define $M$ to be sufficiently large such that $\mathbb{E}_{-i}[y_j]\geq \frac{3}{4}Z$. The $\frac{3}{4}$ is arbitrary; any fraction larger than $\frac{1}{2}$ would suffice, as Hoeffding's inequality bounds the probability that a random variable would deviate from its mean by a constant factor. Thus, applying Hoeffding's inequality on the sum of $y_j$ implies that:
    \begin{equation}
        \text{Pr}(\Delta<-S/2)
        =\text{Pr}\left(\sum_{j\not=i}w_j<S/2\right)
        \leq \text{Pr}\left(\sum_{j\not=i}y_j<S/2\right)\leq e^{-cN}
    \end{equation}
    for some $c>0$. Thus, we have that:
    \begin{equation}
        \mathbb{E}_{-i}\left[\mathds{1}_B\left|\frac{\Delta^3}{S^2(w_i+S+\Delta)}\right|\right]<O(N)\text{Pr}(\Delta<-S/2)\leq O(N)e^{-cN}=o(N^{-3/2}).
    \end{equation}
    Technically this expression equals $o(N^{-k})$ for any positive $k$, but $3/2$ suffices here.

    Altogether, the results for the good and the bad regions imply that:
    \begin{equation}
    \label{eqn:third_term_bound}
        \begin{aligned}
            \left|\mathbb{E}_{-i}\left[
            \frac{\left(\frac{\Delta}{w_i+S}\right)^3}
            {1+\frac{\Delta}{w_i+S}}
            \right]\right|
            &< \mathbb{E}_{-i}\left[
            \left|\frac{\Delta^3}{S^2(w_i+S+\Delta)}\right|
            \right] \\
            &< 2\mathbb{E}_{-i}\left[
            \mathds{1}_G|\Delta/S|^3
            \right]
            + \mathbb{E}_{-i}\left[
            \mathds{1}_B
            \left|\frac{\Delta^3}{S^2(w_i+S+\Delta)}\right|
            \right] \\
            &< 2\mathbb{E}_{-i}\left[|\Delta/S|^3\right]
            + o(N^{-3/2}) \\
            &= O(N^{-3/2}) + o(N^{-3/2}) \\
            &= O(N^{-3/2}).
        \end{aligned}
    \end{equation}
    thus bounding the remainder term.

    Taking the results of equations \ref{eqn:exp_sec_order_simple} and \ref{eqn:third_term_bound}, plugging them into equation \ref{eqn:delta_taylor_exp} and taking an expectation gives:
    \begin{equation}
    \label{eqn:exp_delta_taylor}
         \mathbb{E}_{-i}\left[\left(1+\frac{\Delta}{w_i+S}\right)^{-1}\right]=1+\frac{I}{N}+O(N^{-3/2}).
     \end{equation}

     Plugging results back in now until we reach the original expression for $q(x'|x)$, plugging equation \ref{eqn:exp_delta_taylor} into equation \ref{eqn:exp_pre_expand} gives:
     \begin{equation}
         \mathbb{E}_{-i}\left[\frac{1}{w_i+\sum_{j=1,j\not=i}^N w_j} \right]=\frac{1}{w_i+S}\left(1+\frac{I}{N}+O(N^{-3/2})\right).
     \end{equation}
     Now, note that:
     \begin{equation}
         \frac{1}{w_i+S}=\frac{1}{NZ}\frac{1}{1+\frac{w_i-Z}{NZ}}=\frac{1}{NZ}\frac{1}{1-\frac{1-w_i/Z}{N}}=\frac{1}{NZ}\left(1+\frac{1-w_i/Z}{N}+O(N^{-2})\right)
     \end{equation}
     Thus, one has that:     
     \begin{equation}
        \label{eqn:clean_exp_invsum}
        \begin{aligned}
        \mathbb{E}_{-i}
        \left[
            \frac{1}{w_i+\sum_{j=1,j\not=i}^N w_j}
        \right]
            &= \frac{1}{NZ}
               \left(1+O(N^{-3/2})+\frac{I}{N}+\frac{1-w_i/Z}{N}+O(N^{-2})\right) \\
            &= \frac{1}{NZ}
               \left(1+\frac{I+1-\frac{\pi(x')}{p(x'|x)}}{N}+O(N^{-3/2})\right),
        \end{aligned}
    \end{equation}
    where the $O(N^{-2})$ term is absorbed by the $O(N^{-3/2})$. Plugging \ref{eqn:clean_exp_invsum} back into equation \ref{eqn:messy_trans_kernel} gives:
    \begin{equation}
        \begin{aligned}
        q(x'|x)
            &= NR(x')^\beta
            \mathbb{E}_{-i}
            \left[
                \frac{1}{w_i+\sum_{\substack{j=1\\j\ne i}}^N w_j}
            \right] \\
            &= \pi(x')
            \left(1+\frac{I+1-\frac{\pi(x')}{p(x'|x)}}{N}\right)
            + O(N^{-3/2}) \\
            &= \pi(x')
            \left(
                1+\frac{1}{N}\left(\int \frac{\pi(y)^2}{p(y|x)}dy-\frac{\pi(x')}{p(x'|x)}\right)
            \right)
            + O(N^{-3/2}).
        \end{aligned}
    \end{equation}
    Defining $D(x'|x)\coloneqq \int\frac{\pi(y)^2}{p(y|x)}dy-\frac{\pi(x')}{p(x'|x)}$ thus finishes the proof.
\end{proof}

The proof of corollary 2 is relatively straightforward. As the version stated in the main body is precise, we give only the proof here.
\begin{proof}[Proof of Corollary \ref{corr:chisq}, First order correction as a $\chi^2$ divergence]
    Directly calculating:
    \begin{equation}
        \begin{aligned}
        \mathbb{E}_{x'\sim p(\cdot|x)}[D(x'|x)]
            &= \int \frac{\pi(y)^2}{p(y|x)}
               - \frac{\pi(y)}{p(y|x)}p(y|x)\,dy \\
            &= \int \frac{\pi(y)^2}{p(y|x)}\,dy - 1 \\
            &= \int \frac{\pi(y)^2}{p(y|x)} - p(y|x)\,dy \\
            &= \int \left(
               \frac{\pi(y)-p(y|x)}{p(y|x)}
               \right)^2 p(y|x)\,dy.
        \end{aligned}
    \end{equation}
    where we used that $\int \pi(y)dy=\int p(y|x)dy=1$. This equalling $\chi^2(\pi||p(\cdot|x))$ is by definition of the $\chi^2$ divergence.
\end{proof}

\section{Regex Details}
\label{app:regex}
\subsection{Example Prompts}
Each regex in the dataset of \citet{siddiq2024re} has a prompt which briefly explains what the regex does and gives some examples for matches and non-matches. Table~\ref{tab:regex_prompts} contains these prompts and the dataset's reference regexes for the three regexes used as a validation set. Note that the prompts only show a few of the examples, while the accuracy is measured over all 25 (13 matches and 12 non-matches). All methods which get as input some prior regex as a hint used the prompt
\begin{quote}
\ttfamily Please output a regular expression that detects the following, with nothing preceding or succeeding it. \{description\}\\[0.5em]
The last proposed regex was \{previous regex\}. The regex is:
\end{quote}
where \texttt{\{description\}} is the regex' prompt from the dataset and \texttt{\{previous regex\}} is the hint, which is empty at the first step. Methods with no hint omit the sentence with \texttt{The last proposed regex was}.

\begin{table}[h]
\centering
\small
\begin{tabular}{p{0.25\linewidth} p{0.65\linewidth}}
Example correct regex & Description \\
\midrule
\texttt{\textasciicircum\textbackslash d+([\textasciicircum.,])?\$} &
Accepts positive integers.\newline
Match examples:\newline
- ``34''\newline
- ``1''\newline
- ``1000000000''\newline
Non-match examples:\newline
- ``-34''\newline
- ``-1'' \\
\midrule
\texttt{\textasciicircum\textbackslash d+\textbackslash/?\textbackslash d*\$} &
Used to validate fractions or UK odds.\newline
It will match integers, allowing 10/1 to be entered as 10, and fractions.\newline
It will not match negative numbers or fractions, spaces or any alphabetical characters.\newline
Match examples:\newline
- ``10''\newline
- ``2/3''\newline
- ``100/30''\newline
Non-match examples:\newline
- ``-10''\newline
- ``-2/3'' \\
\midrule
\texttt{\textasciicircum-?\textbackslash d+(\textbackslash.\textbackslash d+)?\$} &
Matches any unsigned or signed floating point number/numeric string.\newline
Match examples:\newline
- ``123''\newline
- ``3.14159''\newline
- ``-1.2''\newline
Non-match examples:\newline
- ``abc''\newline
- ``3.4.2'' \\
\end{tabular}
\caption{Prompts (\texttt{refined\_prompt} in the dataset) and reference regexes for the three validation regexes. References are the ground-truths given in the dataset of \citet{siddiq2024re}.}
\label{tab:regex_prompts}
\end{table}

\subsection{Regex Baselines}
\label{app:baselines_details}
In this section we review the different baselines used for the regex discovery task and how their setups and/or hyperparameters were tuned. Baselines not mentioned here do not have any hyperparameters and use the same prompts as DME. For all methods we used 3 regexes not included in the test set as validation regexes and used the hyperparameters that had the highest average success rate over these 3.

For \textbf{RS-GRPO} \citep{jiang2025risk} we sweeped their $\beta$ parameter and the learning rate. Note that the RS-GRPO objective is of the form $\frac{1}{\beta}\log\mathbb{E}[e^{\beta R(x)}]$, so their $\beta$ is different to the one used by other methods here. We ran a hyperparameter sweep over $\beta=2,8$, as these are the two $\beta$s used for the main results in \citet{jiang2025risk}, and learning rate$=10^{-4},10^{-5},10^{-6}$, finding that $\beta=8,\text{lr}=10^{-6}$ is best.

For \textbf{TTT-Discover} \citep{yuksekgonul2026learning} we initially tried to use an open-source replication that does not require the Tinker API \citep{tml2026tinker} but did not manage to get it to perform well on the regexes. Thus, we used the official TTT-Discover codebase in conjunction with skyRL tx \citep{griggs2025skrylv01} to simulate the Tinker backend. We ran a hyperparameter sweep over learning rates $10^{-5},10^{-6},10^{-7}$ and KL coefficients $0,0.1,0.01$, finding that $10^{-7}$ and $0$ respectively performed best.

For \textbf{GFlowNets} we implemented versions using both the trajectory balance loss \citep{malkin2022trajectory} and the subtrajectory balance loss \citep{madan2023learning}, finding that the latter gave slightly better performance. The subtrajectory balance loss slightly increases runtime relative to the trajectory balance loss as the former requires evaluating the reward given to all partial sequences and not only at the terminal state, which in our case is the final completion. We ran a hyperparameter sweep over $\beta=0,1,3,10,30,100,300,1000,3000$ and learning rates $10^{-6},10^{-7},10^{-8}$, finding that $1000,10^{-7}$ respectively performed best, although many $\beta$s and learning rates performed equally well.

For \textbf{GEPA} \citep{agrawal2026gepa} we manually iterated over the prompt and additional textual information given to the model. Moreover, we force the model to stop thinking and give an answer after $4096$ tokens as otherwise the model tends to exhaust its context length without outputting a new prompt. Otherwise, GEPA had no hyperparameters tuned.

\textbf{SCS} \citep{gideoni2026simple} used the same prompt as GEPA, only a single trial, and otherwise uses the same hyperparameters as given by \citet{gideoni2026simple}.

\subsection{Ablations}
In this section we test how different design choices, namely DME's beta parameter and the two tweaks to the base algorithm, affect its performance on the regex task. All experiments and reported performance are over the same 20 test regexes.

To test how DME's performance is affected by beta we sweep over betas from $10^0$ to $10^6$. Figure \ref{fig:dme_regex_beta_sweep} shows that above about $\beta=10^3$ DME on average performs the same. Note that this does not mean that DME's performance monotonically increases for all problems, e.g. for the third validation regex in Table \ref{tab:regex_prompts} $\beta=100$ has a success rate of 96\% while the success rate of $\beta=3000$ is 86\%.

\begin{figure}[h]
    \centering
    \includegraphics[width=0.7\linewidth]{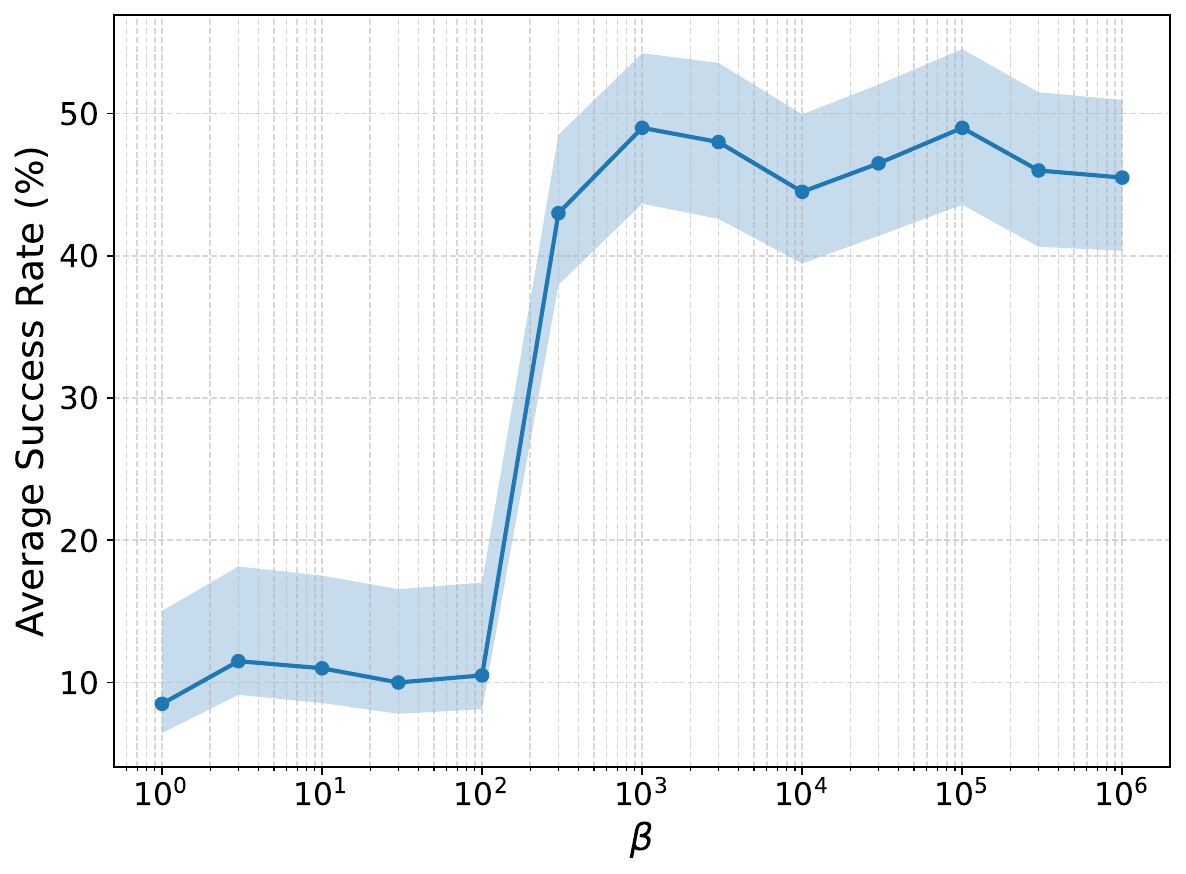}
    \caption{DME's average performance for the regex task increases with $\beta$, stabilizing after $\beta=10^3$. Uncertainties are 95\% confidence intervals.}
    \label{fig:dme_regex_beta_sweep}
    \vspace{-5pt}
\end{figure}

For the regex task it does not seem to matter much whether the path marginalization or length-respecting prior are used, see Table \ref{tab:regex_tweaks_ablate}. This makes sense as the output is relatively short and does not contain long, explicit reasoning leading up to it.

\begin{table}[h]
\centering
\begin{tabular}{l l|c c}
 & & \multicolumn{2}{c}{Path Marginalisation} \\
 & & \cmark & \xmark \\
\midrule
\multirow{2}{*}{\makecell[c]{Length-Respecting\\Prior}}
    & \cmark & 45.5\% & 49.5\% \\
    & \xmark & 46.5\% & 48.5\% \\
\end{tabular}
\caption{DME average success rate ($\beta=1000$), with and without the length-respecting prior and path marginalisation. 95\% confidence intervals are about $\pm5\%$ for all cells.}
\label{tab:regex_tweaks_ablate}
\end{table}

\section{Coding Competitions Details}
\label{app:code_comp}
\subsection{DME Ablations and Hyperparameter Selection}
To initially select which hyperparameters to use for DME we ran it on a random subset of 100 LiveCodeBench problems, distinct and harder than those used for comparing methods in Figure \ref{fig:lcb}. We sweep over $\beta=10^1,10^3,10^6$ and test DME with vs without the length-respecting uniform prior. Figure \ref{fig:code_hyp_sweep} shows the result of that sweep, where initially promising runs were run with an additional seed. $\beta=10^3$ seems to work better than $\beta=10^1$ or $\beta=10^6$, albeit only when using the length-respecting uniform prior.

\begin{figure}[!h]
    \centering
    \includegraphics[width=0.8\linewidth]{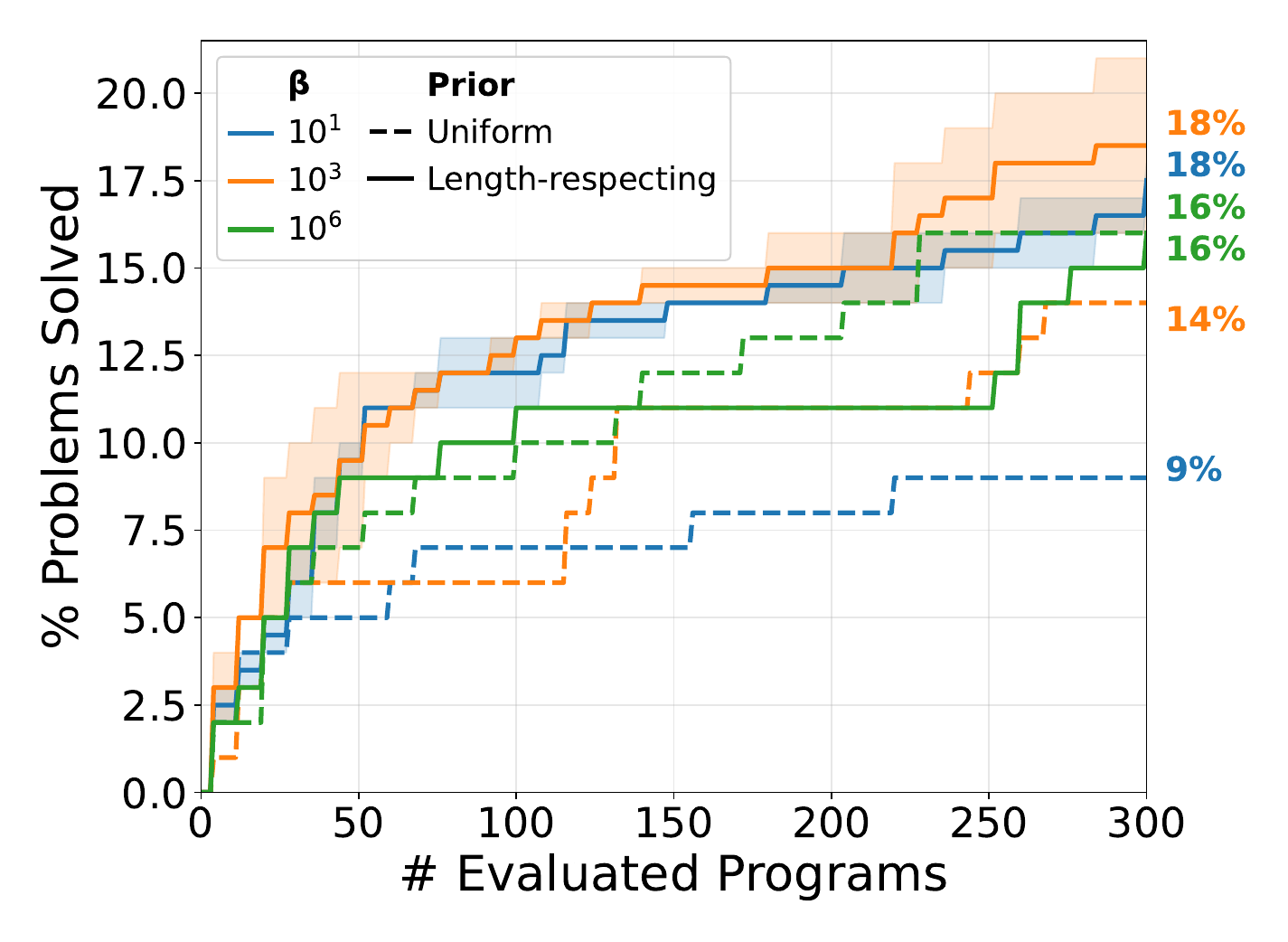}
    \vspace{-20pt}
    \caption{DME for different betas and with different priors, over a hard 100 question subset of LiveCodeBench. Configurations with shaded regions report mean and standard deviation over two seeds.}
    \label{fig:code_hyp_sweep}
\end{figure}

\subsection{Results for a Reasoning Model}
We compare DME to $(1,N)$ and $(1+N)$ when using a reasoning model, namely Qwen3-1.7B \citep{yang2025qwen3}. DME here uses $\beta=10^6$. Surprisingly, in this case we find DME to slightly underperform both evolutionary strategies. As the results for section \ref{sec:math_bounds} show, DME can be competitive when applied to reasoning models over many different problems, so it is unclear why DME underperformed here.

\begin{figure}[h]
    \centering
    \includegraphics[width=0.6\linewidth]{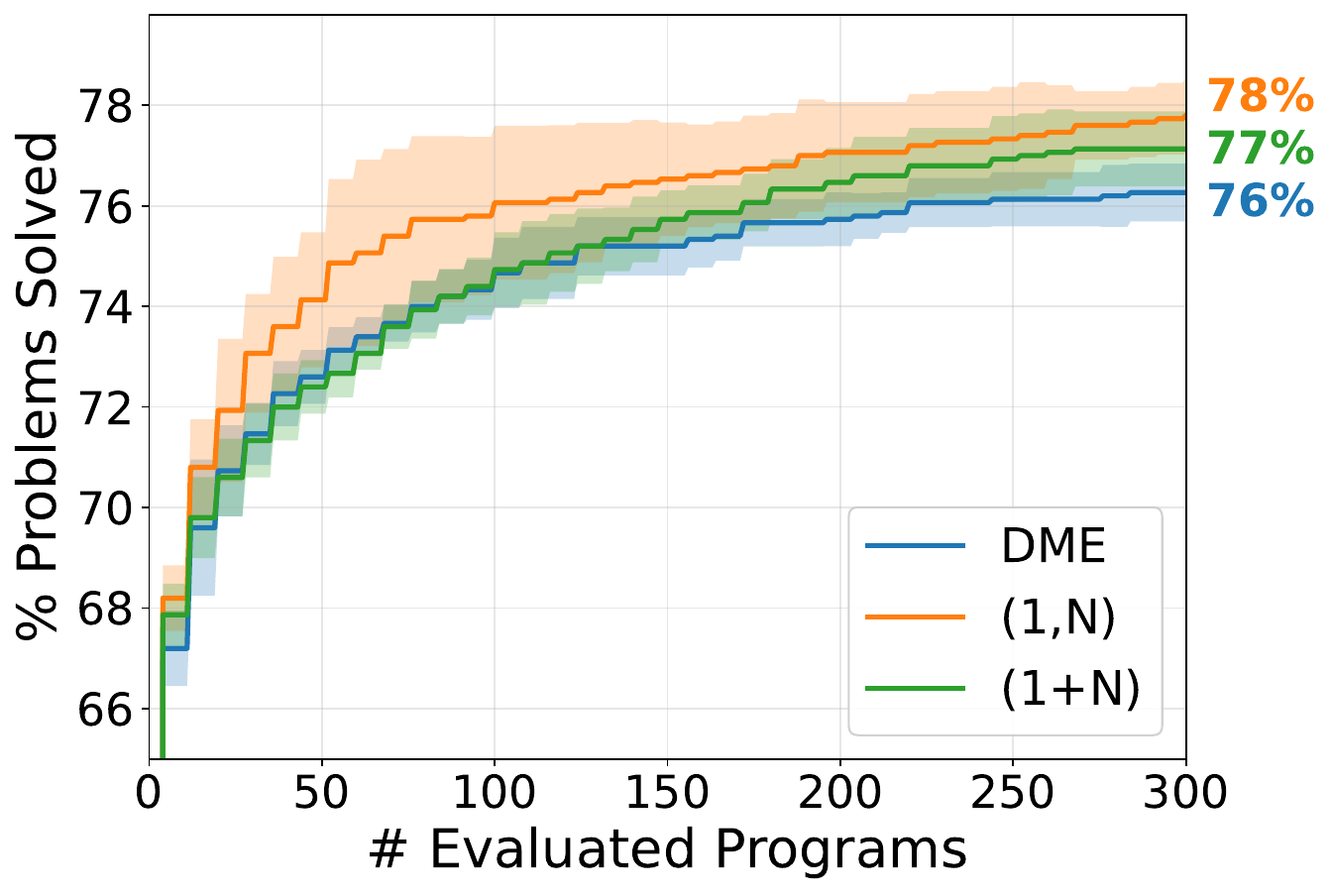}
    \caption{Different methods' performance when using a reasoning model as the proposer. Surprisingly, here DME slightly underperforms both evolutionary strategies.}
    \label{fig:dme_reasoning_neg}
\end{figure}

\section{Math Bounds Details}
\label{app:math_details}
\subsection{DME Details and Hyperparameter Tuning}
DME's reward is $e^{\text{bound}}$, with the bound's sign flipped when it needs to be minimized rather than maximized. Note that both BoN and SCS are invariant to the reward parameterization as they have no explicit fitness-based selection during the search, only selecting the min/max discovered bound after evaluating all programs.

DME's hyperparameters were tuned by running short, single-chain runs that generate up to 1000 programs on the third autocorrelation inequality problem from the original AlphaEvolve paper \citep{novikov2025alphaevolve}. This problem was not studied in \citet{gideoni2026simple}, hence it being excluded from the main results. Table \ref{tab:dme_tuning} shows the results found per run. Both using the standard prior and increasing beta seemed to help. 

\begin{table}[h]
\centering
\begin{tabular}{@{}c cc@{}}
& \multicolumn{2}{c}{Length-respecting prior} \\
\cmidrule(lr){2-3}
$\beta$ & \cmark & \xmark \\
\midrule
$10^{3}$ & 1.556 & 1.508 \\
$10^{6}$ & 1.489 & \textbf{1.488} \\
\end{tabular}
\caption{DME hyperparameter tuning on the third autocorrelation inequality problem, lower is better. Each entry is the best discovered upper bound.}
\label{tab:dme_tuning}
\end{table}

\subsection{Probability of Improvement Calculations}
Additional programs were generated to get a better estimate for the probability of improvement, with the average then being calculated by bootstrapping. Specifically, we sample an additional 1000 programs from both BoN and SCS. More samples are generated from SCS by generating more trials. We sample 3 additional chains for DME, each running for the same number of steps as the original 4 did. The bootstrapping consists of sampling from each method without replacement, namely from the BoN programs, the SCS trials, and the DME chains.

\subsection{Per-Problem Breakdown}
\label{app:math_per_problem}
See Figure \ref{fig:per_prob_prob_imp}. Note that per-problem estimates are much less reliable than the aggregate estimate due to having less data.

\begin{figure}
    \centering
    \includegraphics[width=\linewidth]{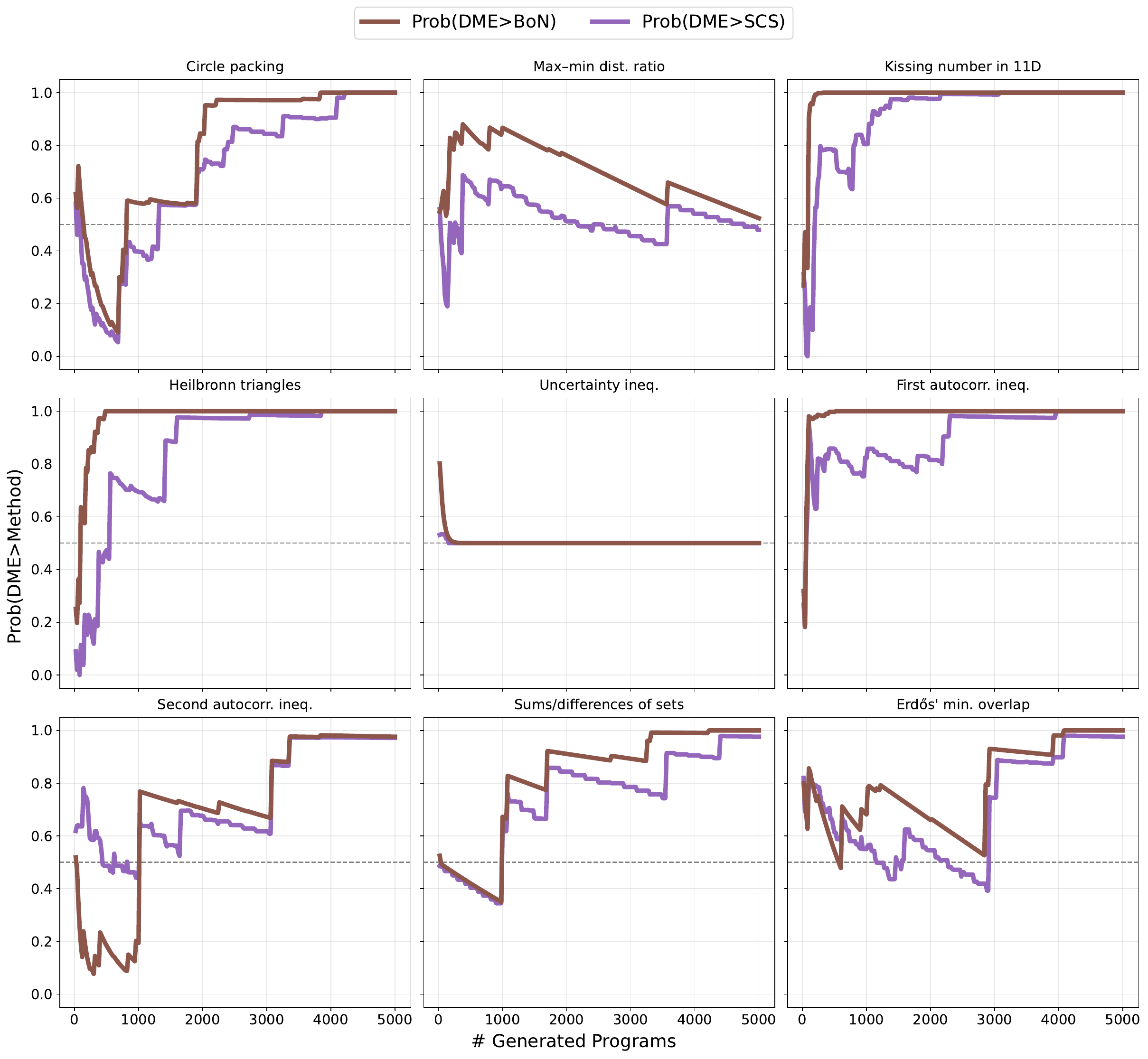}
    \caption{Per-problem DME probability of improvement.}
    \label{fig:per_prob_prob_imp}
\end{figure}

\end{document}